%% file: main.tex
\documentclass{article}

\usepackage{microtype}
\usepackage{graphicx}
\usepackage{subcaption}
\usepackage{booktabs} 

\usepackage{hyperref}

\usepackage[accepted]{icml2026}

\usepackage{amsmath}
\usepackage{amssymb}
\usepackage{mathtools}
\usepackage{amsthm}

\usepackage{multirow}
\usepackage{xcolor}
\usepackage{colortbl}

\usepackage[capitalize,noabbrev]{cleveref}

\theoremstyle{plain}
\newtheorem{theorem}{Theorem}[section]

\theoremstyle{definition}

\theoremstyle{remark}

\usepackage[textsize=tiny]{todonotes}

\icmltitlerunning{Uncovering Latent Communication Patterns in Brain Networks via Adaptive Flow Routing}

\begin{document}

\twocolumn[
  \icmltitle{Uncovering Latent Communication Patterns\\ in Brain Networks via Adaptive Flow Routing}



  \icmlsetsymbol{equal}{*}

\begin{icmlauthorlist}
\icmlauthor{Tianhao Huang}{uva}
\icmlauthor{Guanghui Min}{uva}
\icmlauthor{Zhenyu Lei}{uva}
\icmlauthor{Aiying Zhang}{uva}
\icmlauthor{Chen Chen}{uva}
\end{icmlauthorlist}

\icmlaffiliation{uva}{Department of Computer Science, University of Virginia, Charlottesville, USA}

\icmlcorrespondingauthor{Chen Chen}{zrh6du@virginia.edu}
    
  \icmlkeywords{Machine Learning, ICML}

  \vskip 0.3in
]



\printAffiliationsAndNotice{}  


\input{sections/0_abstract}
\input{sections/1_introduction}
\input{sections/2_related_work}
\input{sections/3_preliminary}

\input{sections/4_method}
\input{sections/5_experiment}
\input{sections/6_conclusion}




\section*{Impact Statement}

Our research aims to enhance the interpretability and accuracy of brain network analysis for medical diagnosis. By explicitly modeling neural information flow, AFR-Net offers a more transparent mechanism for identifying pathological routing patterns compared to black-box deep learning approaches. This improved interpretability is crucial for building trust in AI-assisted healthcare. We emphasize that any clinical application of this technology requires rigorous validation to ensure safety and fairness. We do not foresee any immediate negative societal consequences, provided that the technology is used ethically and under human supervision.


\bibliography{references}
\bibliographystyle{icml2026}

\newpage
\appendix
\input{sections/appendix}

\end{document}

%% file: sections/0_abstract.tex

\begin{abstract}
    Unraveling how macroscopic cognitive phenotypes emerge from microscopic neuronal connectivity remains one of the core pursuits of neuroscience. To this end, researchers typically leverage multi-modal information from structural connectivity (SC) and functional connectivity (FC) to complete downstream tasks. Recent methodologies explore the intricate coupling mechanisms between SC and FC, attempting to fuse their representations at the regional level. However, lacking fundamental neuroscientific insight, these approaches fail to uncover the latent interactions between neural regions underlying these connectomes, and thus cannot explain why SC and FC exhibit dynamic states of both coupling and heterogeneity. In this paper, we formulate multi-modal fusion through the lens of neural communication dynamics and propose the \underline{\textbf{A}}daptive \underline{\textbf{F}}low \underline{\textbf{R}}outing \underline{\textbf{Net}}work (AFR-Net), a physics-informed framework that models how structural constraints (SC) give rise to functional communication patterns (FC), enabling interpretable discovery of critical neural pathways.
    Extensive experiments demonstrate that AFR-Net significantly outperforms state-of-the-art baselines. The code is available at \url{https://anonymous.4open.science/r/DIAL-F0D1}.
\end{abstract}

%% file: sections/1_introduction.tex

\section{Introduction}\label{sec:intro}

The human brain is intricately organized as a complex network, where cognitive functions emerge from the dynamic interactions among billions of neurons \citep{avena2018communication}. In the realm of connectomics, this organization is typically characterized by two complementary modalities: structural connectivity (SC), which maps the physical anatomical tracts (e.g., white matter streamlines) serving as the hard-wired infrastructure, and functional connectivity (FC), which captures the statistical temporal correlations of neuronal activities representing information traffic \citep{friston1994functional, friston1993functional, sporns2005human, le2001diffusion, pang2023geometric}. Deciphering the complex coupling mechanism between these two modalities—specifically, how the static SC scaffold constrains and shapes the dynamic FC patterns \citep{doi:10.1073/pnas.0811168106, doi:10.1073/pnas.1315529111, SARWAR2021117609, neudorf2022structure, ran2024unveiling}—remains an important problem in computational neuroscience and is critical for identifying biomarkers of neurological disorders \citep{park2013structural, LI2021102233, segal2023regional}.

Current research efforts seek to elucidate the complex coupling mechanisms between SC and FC, and have made substantial contributions toward more effective fusion and integrative modeling of these two modalities. Early studies primarily focused on representing FC and SC using GNN-based approaches \citep{DBLP:conf/iclr/KipfW17}, and subsequently fusing the two modalities at the regional level through relatively simple fusion strategies, such as concatenation \cite{sebenius2021multimodal, wang2025dynamic, LI2021102233}, weighted summation \cite{zhang2021deep}, or self-attention mechanisms \cite{zhang2020multi, zuo2023alzheimer, oota2024attention}. However, despite their strong performance in feature extraction, these data-driven methods tend to over-rely on static topological structures and mainly capture local homophily. This theoretical limitation hinders these models from adequately explaining the pronounced discrepancies between structural connections and synchronized functional activity, commonly referred to as the regional heterogeneity problem. Meanwhile, a substantial body of literature indicates that the relationship between FC and SC is far from one-to-one mapping and is instead highly complex \cite{koch2002investigation, lim2019discordant, suarez2020linking}. At the regional level, FC and SC may be tightly coupled in certain brain regions, while exhibiting weak coupling in others. To address these challenges, researches such as RH-BrainFS \cite{NEURIPS2023_b9c353d0} address the regional heterogeneity of SC-FC coupling by introducing fusion bottlenecks. Concurrently, increasing evidence from network control theory suggests that neural signals propagate via multiple parallel pathways to ensure robustness and efficiency, a process better modeled by diffusion dynamics or circuit theory rather than simple topological hops \citep{griffa2023evidence, neudorf2023comparing}. Recent works like NeuroPath \cite{NEURIPS2024_7d60f19a} introduce the concept of ``topological detours" to model indirect structural support for functional links, while EDT-PA \cite{sheng2025evolvablegraphdiffusionoptimal} models high-order structural dependencies via adaptive diffusion and aligns structural and functional connectomes through pattern-specific optimal transport.

\input{figures/intro}

While acknowledging that neural signaling traverses multiple pathways beyond direct connections or shortest paths, these approaches predominantly tackle the problem from a topological or architectural perspective. Lacking a grounding in fundamental neural communication dynamics, they fail to quantify the magnitude of information transmission across distinct routes and therefore cannot distinguish the critical signaling pathways that carry the majority of the information flow. In reality, we posit that authentic neural signaling operates within an intermediate regime between stochastic random walks and deterministic shortest paths \citep{bullmore2009complex, avena2018communication, avena2019spectrum, amico2021toward, seguin2018navigation, seguin2020network, seguin2023communication}, as illustrated in Figure \ref{fig:intro_1}. On this basis, we propose \textbf{AFR-Net} (\underline{\textbf{A}}daptive \underline{\textbf{F}}low \underline{\textbf{R}}outing \underline{\textbf{Net}}work), a novel physics-informed framework that explicitly models brain network coupling as a global information flow process. Drawing inspiration from transport theory and electric circuit analogs, we conceptualize the structural connectome as a dynamic flow network with adaptive capacities, rather than a static graph. Unlike previous black-box fusion strategies, AFR-Net introduces a differentiable flow routing module that mathematically deduces how information ``flows'' through the structural cables to generate the observed functional correlations. By solving a global equilibrium governed by a regularized graph Laplacian, our model identifies the critical transmission pathways—highlighting edges with high ``information load''—and uses these latent patterns to guide the message-passing process.

Specifically, AFR-Net incorporates three key innovations aligned with the nature of neural communication. 
First, we introduce a \textit{Physics-Informed Graph Construction} phase, which draws inspiration from electrical circuit theory, specifically the computation of effective resistance, to construct a dynamic flow network with learnable edge capacities based on the topological structure.
Second, we derive a \textit{Differentiable Information Flow Solver}. By modeling the brain as a resistor network, we compute a closed-form solution for edge-level information traffic, explicitly characterizing how global functional demands drive flow through the structural constraints. 
Finally, a \textit{Pattern-Guided Aggregation} layer leverages these mathematically derived flow intensities to bias the model's attention toward biologically meaningful pathways. 
We validate AFR-Net on multiple datasets, including Adolescent Brain Cognitive Development Study and Parkinson's Progression Markers Initiative. The extensive experiments demonstrate that our method not only achieves state-of-the-art performance but, more importantly, offers superior interpretability by uncovering the specific neural circuits that underpin pathological deviations.

Our main contributions are summarized as follows:
\begin{itemize}
    \item We propose a mechanism-driven framework that formulates multimodal brain network fusion as a latent flow routing problem, offering a mathematically grounded explanation for SC-FC coupling.
    \item We introduce AFR-Net, which integrates physics-informed graph construction with a differentiable closed-form flow solver to capture the global, multi-path nature of neural communication.
    \item AFR-Net significantly outperforms existing baselines on four real-world datasets. Furthermore, it identifies disease-specific routing patterns that align with clinical neuroscientific findings, demonstrating its potential as a tool for biomarker discovery.
\end{itemize}

%% file: figures/intro.tex
\begin{figure}[t]
    \centering
    \includegraphics[width=\linewidth]{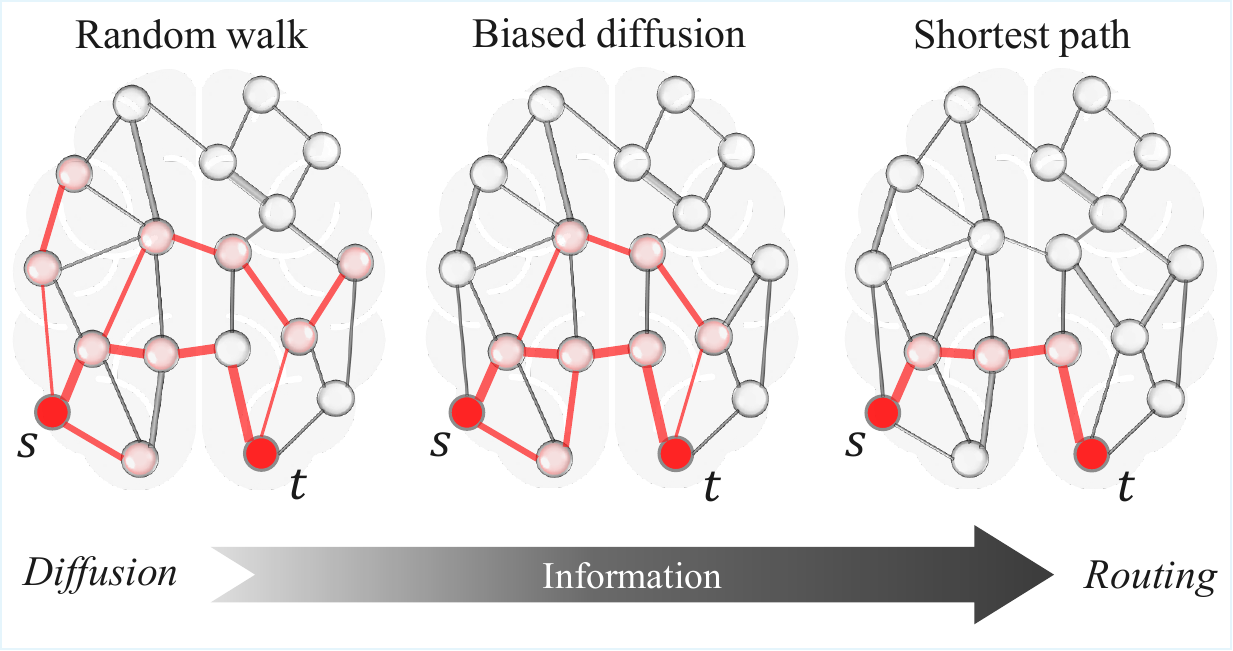}
    \caption{Neural communication models can be categorized along a spectrum ranging from diffusion to routing, based on the extent of global topological information available to the network. 
    We hypothesize that biological neural communication operates in an intermediate regime, utilizing multiple alternative paths to ensure robustness and efficiency. In the illustration, the thickness of each path represents its information flow capacity, which is derived from the reciprocal of the connection strength.}
    \label{fig:intro_1}
\end{figure}

%% file: sections/2_related_work.tex

\section{Related Work}\label{sec:related_work}

\paragraph{Brain Network Analysis}
The application of deep learning to connectomics has evolved from applying generic graph algorithms to designing brain-inspired architectures. Early works adapted standard GNNs, such as GCN \cite{DBLP:conf/iclr/KipfW17} and GAT \cite{DBLP:conf/iclr/VelickovicCCRLB18}, to classify functional or structural connectomes. While effective, these models treat brain regions as interchangeable nodes, ignoring the unique topological properties of neural circuits. 
To address this, specialized architectures emerged: BrainGNN \cite{LI2021102233} introduced ROI-aware pooling to preserve anatomical identity, while Transformer-based approaches like Brain Network Transformer \cite{NEURIPS2022_a408234a} and ALTER \cite{NEURIPS2024_2bd3ffba} leverages self-attention to capture long-range dependencies essential for cognitive integration. 
Recognizing that structural and functional views are complementary, recent research has shifted towards multimodal fusion. The core challenge lies in modeling the complex coupling between the static structural connectivity scaffold and dynamic functional connectivity signals \cite{park2013structural}. Cross-GNN \cite{10182318} proposes a mutual learning framework to regularize latent representations across modalities, while Triplet \cite{9857948} employs multi-scale attention to extract hierarchical features. More recently, \citet{NEURIPS2023_b9c353d0} propose RH-BrainFS to tackle regional heterogeneity by introducing fusion bottlenecks to model indirect interactions. \citet{bian2023adversarially} propose an adversarially trained persistent homology-based GCN that integrates global topological invariants from persistent homology with local graph features, utilizing a clinical prior-guided adversarial training strategy to enhance robustness in brain disease diagnosis.
However, despite these architectural innovations, a critical limitation persists: these methods learn to map inputs to labels via statistical correlations without explicitly modeling the underlying biological process. Consequently, they struggle to explain the mechanistic ``why'' behind their predictions.

\paragraph{Communication Dynamics and Flow Routing}
Network neuroscience posits that cognitive functions emerge from dynamic communication processes unfolding on the structural connectome \cite{park2013structural, avena2018communication, fakhar2025general}. Two dominant paradigms exist: \textit{routing}, where signals travel via selective shortest paths, and \textit{diffusion}, where information spreads stochastically or flows based on global network conductance \cite{graham2014routing, mivsic2015cooperative, seguin2020network, seguin2023communication}. While shortest path efficiency is a standard metric, growing evidence suggests that brain communication utilizes parallel, redundant pathways to ensure robustness \cite{sporns2005human, avena2018communication}. NeuroPath \cite{NEURIPS2024_7d60f19a} recently attempted to model ``topological detour'' but remains limited to heuristic masking. \citet{10.1007/978-3-031-72069-7_35} proposes detour connectivity to characterize the SC-FC coupling by enumerating indirect structural pathways that sustain functional interactions. Most existing deep learning models essentially overfit to the routing efficiency hypothesis (via message passing on direct edges), neglecting the broader ``flow-based'' nature of neural signaling.
Our proposed method AFR-Net distinguishes itself by mathematically embedding this flow perspective into a deep learning framework. Unlike prior works, we do not simply fuse SC and FC features; instead, we treat SC as a flow network with learnable capacities and FC as the demand driving the flow, solving for a global equilibrium state that aligns with physical diffusion laws.

%% file: sections/3_preliminary.tex

\section{Preliminaries}
\label{sec:preliminaries}

\input{figures/framework}

\subsection{Problem Formulation}

We define the Multimodal Brain Network Analysis task over a dataset $\mathcal{D} = \{ \mathcal{S}_1, \mathcal{S}_2, \dots, \mathcal{S}_K \}$, where $K$ denotes the total number of subjects. Each sample $\mathcal{S}_k = (\mathcal{G}_{\text{sc}}^{(k)}, \mathcal{G}_{\text{fc}}^{(k)}, y^{(k)})$ consists of a structural connectivity graph, a functional connectivity graph, and a corresponding label $y^{(k)} \in \{0, 1, \dots, C-1\}$, where $C$ represents the number of diagnostic classes (e.g., $C=2$ for binary diagnosis).

Formally, we represent the brain network as a weighted undirected graph $\mathcal{G} = (\mathcal{V}, \mathbf{A}, \mathbf{X})$, where $\mathcal{V} = \{v_1, \dots, v_N\}$ is the set of $N$ nodes corresponding to specific brain regions, and $\mathbf{X} \in \mathbb{R}^{N \times d_{x}}$ is the node feature matrix encoding regional attributes. The connectivity structure is distinguished by the symmetric adjacency matrix $\mathbf{A}$:
\begin{itemize}
    \item Structural connectivity graph $\mathcal{G}_{\text{sc}} = (\mathcal{V}, \mathbf{A}_{\text{sc}}, \mathbf{X})$, the adjacency matrix $\mathbf{A}_{\text{sc}} \in \mathbb{R}^{N \times N}_{\ge 0}$ represents the physical anatomical connections (e.g., white matter fiber streamlines) between regions.
    \item Functional connectivity graph $\mathcal{G}_{\text{fc}} = (\mathcal{V}, \mathbf{A}_{\text{fc}}, \mathbf{X})$, where the adjacency matrix $\mathbf{A}_{\text{fc}} \in \mathbb{R}^{N \times N}$ captures the statistical dependencies (e.g., temporal correlations) observed among dynamic regional neural activities.
\end{itemize}

Given this multimodal input, our objective is to learn a mapping function $f: (\mathcal{G}_{\text{sc}}, \mathcal{G}_{\text{fc}}) \to y$ that accurately predicts the subject's downstream task label.

\subsection{Graph Laplacian}

The graph Laplacian is traditionally defined as $\mathbf{L} = \mathbf{D} - \mathbf{A}$, where $\mathbf{D}$ is the degree matrix. To model information flow as a physical process, we utilize the algebraic representations of graphs. We define the node-edge incidence matrix $\mathbf{B} \in \{-1,+1,0\}^{M \times N}$, where $M$ is the count of unique undirected edges in the graph. For each edge $e_m = (i, j) \in \mathbf{A}$, we assign an arbitrary orientation (e.g., $i \to j$) yielding entries $B_{mi} = 1$ and $B_{mj} = -1$. This allows us to express the potential difference across an edge using the standard basis vectors. Let $\mathbf{e}_i \in \mathbb{R}^N$ be the one-hot indicator vector for node $v_i$. The vector $(\mathbf{e}_i - \mathbf{e}_j)$ corresponds to the $m$-th row of $\mathbf{B}$, mapping node potentials to edge gradients.
Crucially, analogous to reference directions in circuit theory, this orientation is merely a mathematical convention required to define discrete gradient operators. Since our flow intensity metric is derived from the squared potential difference (akin to power dissipation $P=I^2R$), it captures the magnitude of information load. Consequently, the metric remains invariant to the initial edge orientation, ensuring the final representation reflects the connection's usage intensity rather than a vector direction.


Let $\mathbf{C} = \text{diag}(c_1, \dots, c_M)$ denote the diagonal matrix encapsulating the learned weight of edges. 
The weighted graph Laplacian $\mathbf{L}_{\mathbf{C}}$ can then be rigorously decomposed as:
\begin{equation}
    \mathbf{L}_{\mathbf{C}} = \mathbf{B}^\top \mathbf{C} \mathbf{B} = \sum_{(i,j) \in \mathbf{A}} c_{ij} (\mathbf{e}_i - \mathbf{e}_j)(\mathbf{e}_i - \mathbf{e}_j)^\top.
\end{equation}
This decomposition is central to our framework, as it mathematically links the local edge capacity ($c_{ij}$) to the global diffusion dynamics on the neural substrate.


%% file: figures/framework.tex
\begin{figure*}[t]
    \centering
    \includegraphics[width=\linewidth]{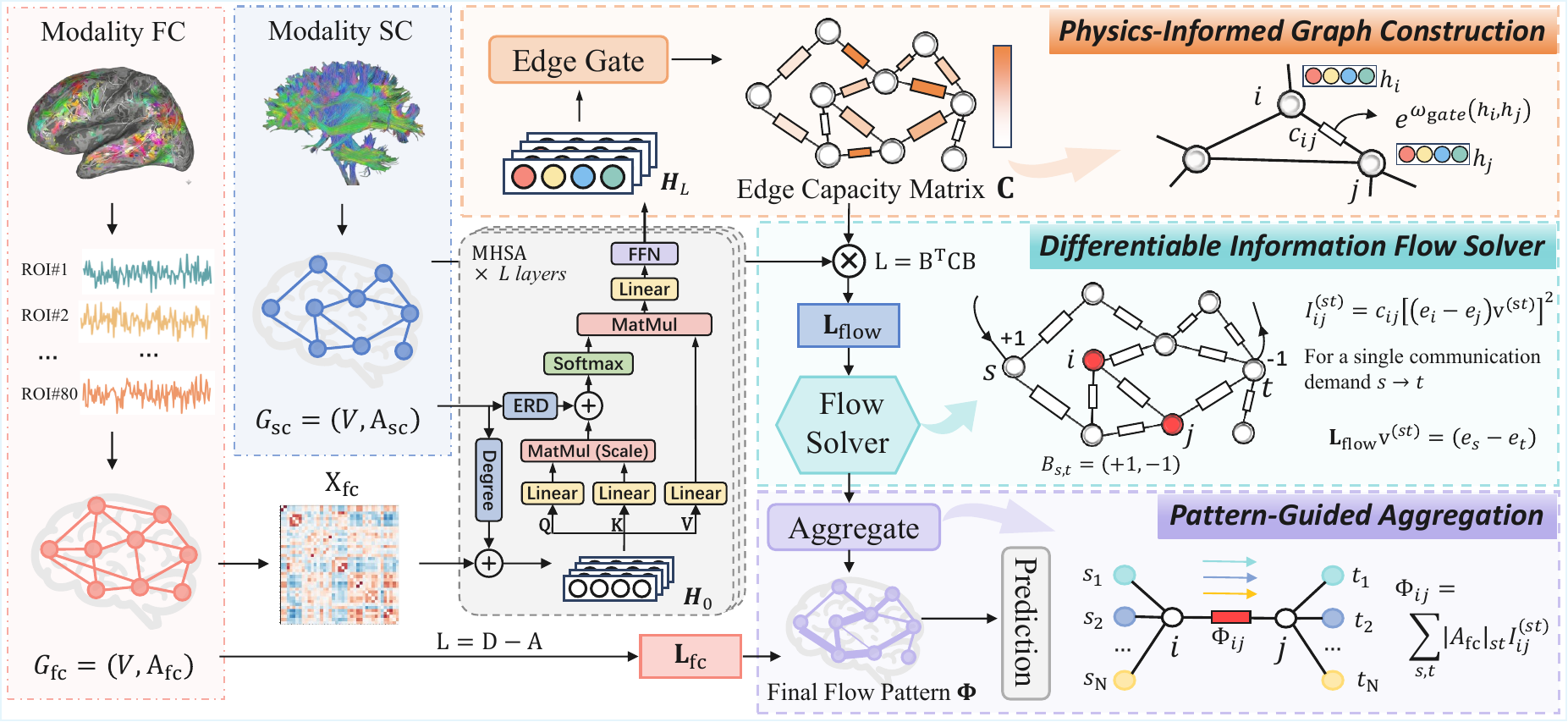}
    \caption{The overall framework of our proposed method AFR-Net. The model explicitly fuses two modalities of structural and functional connectivity via three integrated phases: 
    (1) Physics-Informed Graph Construction, which initializes structure-aware node representations using effective resistance distance (ERD) and constructs a dynamic flow network with learnable edge capacities calculated by edge gate; 
    (2) Differentiable Information Flow Solver, which computes a closed-form solution for global information traffic driven by functional demands ($\mathbf{L}_{\text{fc}}$) through structural constraints ($\mathbf{L}_{\text{flow}}$); and 
    (3) Pattern-Guided Aggregation, which utilizes the learned flow patterns to guide message passing for downstream classification tasks.}
    \label{fig:framework}
\end{figure*}

%% file: sections/4_method.tex

\section{Method}
\label{sec:method}

In this section, we elaborate on the proposed method AFR-Net, a physics-informed framework designed to fuse structural and functional connectivity (Figure~\ref{fig:framework}). The methodological pipeline is organized into three sequential stages: 
First, we establish the biologically plausible environment by constructing a parameterized flow network and initializing structure-aware node features (Section~\ref{subsec:construction}). 
Next, we derive the theoretical core of our framework—a differentiable closed-form solver that computes global information flow equilibrium driven by functional demands (Section~\ref{subsec:solver}). 
Finally, we present the pattern-guided mechanism that leverages these learned routing patterns to aggregate features for disease diagnosis (Section~\ref{subsec:aggregation}).

\subsection{Physics-Informed Graph Construction}
\label{subsec:construction}
To establish a biologically plausible foundation for modeling neural communication, we first construct a physics-informed graph environment. This process comprises two key steps: explicitly encoding the geometric positioning of brain regions (nodes) and dynamically determining the transmission capacity of anatomical connections (edges).

\subsubsection{Structure-Aware Node Initialization}
\label{subsec:encoder}

To simulate neural communication, node representations must explicitly encode their geometric position within the structural scaffold. Standard GNNs often utilize local message passing that ignores the global topology of the brain. To address this, inspired by graphormer~\cite{NEURIPS2021_f1c15925}, we propose a structure-aware flow encoder that replaces topological heuristics with effective resistance distance (ERD). Unlike Graphormer which relies on shortest path distance (SPD) to encode structural locality, in our framework, the proximity between two regions is not defined by ``hops'', but by their transport connectivity (the effective resistance distance)---a measure derived from electrical network analysis for graph \cite{ghosh2008minimizing}.


We initialize node representations by augmenting projected features with a structural prior:
\begin{equation}
    \mathbf{h}_i^{(0)} = \text{Linear}(\mathbf{x}_i) + \phi_{\text{deg}}(\deg(v_i)),
\end{equation}
where $\phi_{\text{deg}}: \mathbb{N} \to \mathbb{R}^d$ is a learnable mapping function that encodes the discrete node degree into the hidden dimension.

We then replace the standard position-encoding self-attention with a resistance-biased attention. For any two nodes $v_i$ and $v_j$, the attention score $\alpha_{ij}^{(l)}$ is computed as:
\begin{equation}
    \alpha_{ij}^{(l)} = \frac{(\mathbf{Q}^{(l)}\mathbf{h}_i^{(l-1)})^\top(\mathbf{K}^{(l)}\mathbf{h}_j^{(l-1)})}{\sqrt{d}} + \psi_{\text{res}}(R_{ij}),
\end{equation}
where $\mathbf{Q}^{(l)}, \mathbf{K}^{(l)} \in \mathbb{R}^{d \times d}$ denote the learnable query and key projection matrices for the $l$-th layer, respectively. And $\psi_{\text{res}}(\cdot)$ is a learnable encoder.

Crucially, $R_{ij}$ is the effective resistance distance computed on the weighted structural Laplacian $\mathbf{L}_{\text{sc}}$:
\begin{equation}
    R_{ij} = (\mathbf{e}_i - \mathbf{e}_j)^\top \mathbf{L}_{\text{sc}}^{\dagger} (\mathbf{e}_i - \mathbf{e}_j).
\end{equation}
Here, $\mathbf{L}_{\text{sc}}^{\dagger}$ denotes the Moore-Penrose pseudoinverse of the structural Laplacian $\mathbf{L}_{\text{sc}}$, which is constructed using SC as edge conductances. This ensures that $R_{ij}$ naturally integrates edge features: a strong structural connection reduces the resistance between nodes, thereby increasing their attentional affinity. Unlike SPD, $R_{ij}$ decreases as the number of parallel paths increases, allowing the model to prioritize robust communication channels supported by dense wiring, aligning perfectly with the physical intuition of neural information flow \cite{griffa2023evidence}.

Based on the resistance-biased attention scores, the node representations are updated iteratively through $L$ Transformer layers. For each layer $l=1 \dots L$, the update rule follows the standard Transformer architecture with the injected structural bias:
\begin{align}
    \mathbf{Z}^{(l)} &= \text{Norm}\left( \mathbf{H}^{(l-1)} + \text{Softmax}(\mathbf{A}^{(l)}) \mathbf{H}^{(l-1)} \mathbf{V}^{(l)} \right), \\
    \mathbf{H}^{(l)} &= \text{Norm}\left( \mathbf{Z}^{(l)} + \text{FFN}(\mathbf{Z}^{(l)}) \right),
\end{align}
where $\mathbf{A}^{(l)}$ is the attention score matrix with entries $\alpha_{ij}^{(l)}$, $\mathbf{V}^{(l)} \in \mathbb{R}^{d \times d}$ is the learnable value projection matrix, and $\text{FFN}(\cdot)$ denotes the Feed-Forward Network.
The final output $\mathbf{H} = \mathbf{H}^{(L)} \in \mathbb{R}^{N \times d}$ constitutes the structure-aware node initialization, which serves as the input signal for the subsequent adaptive flow routing module.

\subsubsection{Constructing the Adaptive Flow Network}
We conceptualize the structural connectome as a dynamic flow network rather than a fixed graph. For each structural edge $e_{ij}$ connecting nodes $v_i$ and $v_j$, we assign a latent flow capacity $c_{ij}$. Analogous to conductance in physical systems, this capacity represents the edge's ability to transmit information flow.

Since the utilization of anatomical links varies by functional context, $c_{ij}$ is learned dynamically. We employ an edge gating network $\omega_{\text{gate}}$ that predicts capacity based on local node states:
$c_{ij} = \exp\left( \omega_{\text{gate}}(\mathbf{h}_i, \mathbf{h}_j) \right).$
The exponential ensures a strictly positive capacity $c_{ij} > 0$. 

The global topology of this flow network is governed by the weighted graph Laplacian. To ensure numerical stability and strict invertibility during the diffusion process, we introduce a small regularization term $\delta \mathbf{I}$. Using the incidence matrix $\mathbf{B} \in \mathbb{R}^{M \times N}$ and capacity matrix $\mathbf{C} = \text{diag}(\mathbf{c})$, we define the regularized structural Laplacian $\mathbf{L}_{\text{flow}}$ as:
\begin{equation}
    \mathbf{L}_{\text{flow}} = \mathbf{B}^\top \mathbf{C} \mathbf{B} + \delta \mathbf{I} = \sum_{(i, j) \in \mathbf{A}_{\text{sc}}}{c_{ij} (\mathbf{e}_i - \mathbf{e}_j)(\mathbf{e}_i - \mathbf{e}_j)^\top} + \delta \mathbf{I}.
\end{equation}
Here, $\mathbf{L}_{\text{flow}}$ is symmetric positive definite, characterizing the diffusion properties of the brain's structural skeleton.

\subsection{Differentiable Information Flow Solver}
\label{subsec:solver}

In this section, drawing inspiration from network flow theory and diffusion dynamics, we formulate a closed-form solution to estimate the latent communication traffic, which is the theoretical core of AFR-Net. Specifically, this traffic represents the information flow distributed across the structural connectivity, where each structural edge acts as a channel whose flow intensity is driven by global functional demands rather than just local connectivity.

\subsubsection{Deriving Information Flow Intensity}
How does information propagate through this network to support functional connectivity? We model this process by seeking a global flow equilibrium. 

Consider a unit information transfer demand from a source node $s$ to a target node $t$. This creates a ``potential difference'' across the network, driving information flow. Mathematically, the induced state distribution $\mathbf{v}^{(st)}$ across all nodes implies solving the linear system:
\begin{equation}
    \mathbf{L}_{\text{flow}} \mathbf{v}^{(st)} = (\mathbf{e}_s - \mathbf{e}_t) \implies \mathbf{v}^{(st)} = \mathbf{L}_{\text{flow}}^{-1} (\mathbf{e}_s - \mathbf{e}_t).
\end{equation}
This formulation, inspired by Kirchhoff’s laws, captures the global dependency of the network: a change in capacity at one edge affects the flow distribution everywhere.

For a specific edge $e_{ij} \in \mathbf{A}_{\text{sc}}$, we define the information flow intensity $I_{ij}^{(st)}$ for this pair $(s, t)$ as:
\begin{equation}
\begin{split}
    I_{ij}^{(st)} &= c_{ij} \left[ (\mathbf{e}_i - \mathbf{e}_j)^\top \mathbf{v}^{(st)} \right]^2 \\
               &= c_{ij} \left[ (\mathbf{e}_i - \mathbf{e}_j)^\top \mathbf{L}_{\text{flow}}^{-1} (\mathbf{e}_s - \mathbf{e}_t) \right]^2.
\end{split}
\end{equation}

\subsubsection{Aggregating Global Communication Patterns}
The brain operates as a massive network of concurrent interactions. The Functional Connectivity matrix $\mathbf{A}_{\text{fc}} \in \mathbb{R}^{N \times N}$ captures the statistical dependencies (e.g., Pearson correlation) between the time-series activities of brain regions. To quantify the magnitude of information exchange required to support these dependencies, we define the communication demand between nodes $s$ and $t$ as the absolute strength of their functional connectivity $|(\mathbf{A}_{\text{fc}})_{st}|$. 

Then, the total information flow $\Phi_{ij}$ on edge $e_{ij}$ is formulated as the aggregated flow across all functional pairs, weighted by their demand:
\begin{equation}
\begin{aligned}
    \Phi_{ij} &= \sum_{(s,t) \in \mathbf{A}_{\text{fc}}} |(\mathbf{A}_{\text{fc}})_{st}| \cdot I_{ij}^{(st)} \\
    &= 2 c_{ij} (\mathbf{e}_i - \mathbf{e}_j)^\top \left( \mathbf{L}_{\text{flow}}^{-1} \mathbf{L}_{\text{fc}} \mathbf{L}_{\text{flow}}^{-1} \right) (\mathbf{e}_i - \mathbf{e}_j),
\end{aligned}
\label{eq:final_flow}
\end{equation}

where $\mathbf{L}_{\text{fc}} = \mathbf{D}_{\text{fc}} - |\mathbf{A}_{\text{fc}}|$. This formula elegantly unifies the local capacity ($c_{ij}$), global structural topology ($\mathbf{L}_{\text{flow}}^{-1}$), and functional demands ($\mathbf{L}_{\text{fc}}$). A high $\Phi_{ij}$ identifies edge $e_{ij}$ as a critical pathway---highlighting both direct highways and essential structural detours that are recruited to optimize information transfer.

\subsection{Pattern-Guided Aggregation}
\label{subsec:aggregation}

To utilize the discovered routing map, we convert the raw flow $\mathbf{\Phi}$ into a soft mask $\mathbf{M}$ using a differentiable Log-Min-Max normalization: 
\begin{equation}
\mathbf{M} = \sigma\left(\tau \cdot (\text{Norm}(\log(\mathbf{\Phi} + \epsilon)) - \theta)\right),
\end{equation}
where $\epsilon = 10^{-6}$ prevents singularity, $\sigma(\cdot)$ is the Sigmoid function, and $\tau, \theta$ are learnable scaling and threshold parameters.
Finally, we employ a masked transformer. Unlike standard attention, we bias the self-attention map with the learned routing mask $\mathbf{M}$:
\begin{equation}
    \mathbf{Z}' = \text{Attention}(\mathbf{Q}\mathbf{H}, \mathbf{K}\mathbf{H}, \mathbf{V}\mathbf{H}, \text{bias}=\mathbf{M}).
\end{equation}
This ensures that message passing is strictly guided by the latent communication patterns discovered by AFR-Net. The graph-level representation is pooled and fed to a classifier trained with standard Cross-Entropy loss.

\paragraph{Optimization.}
Let $\mathbf{z}_{\mathcal{G}}^{(k)}$ denote the pooled representation of the $k$-th subject. The final diagnosis probability is obtained via $\hat{\mathbf{y}}^{(k)} = \text{Softmax}(\text{MLP}(\mathbf{z}_{\mathcal{G}}^{(k)}))$. The entire AFR-Net framework is trained end-to-end by minimizing the standard Cross-Entropy loss:
\begin{equation}
    \mathcal{L} = - \frac{1}{K} \sum_{k=1}^{K} \sum_{c=1}^{C} \mathbb{I}(y^{(k)} = c) \log \hat{y}^{(k)}_c,
\end{equation}
where $y^{(k)}$ is ground-truth and $\mathbb{I}(\cdot)$ is indicator function.


%% file: sections/5_experiment.tex

\section{Experiments}\label{sec:exp}

\subsection{Experimental Settings}
\label{subsec:settings}

\input{tables/main_result}

\paragraph{Datasets.}
We comprehensively evaluate AFR-Net on two multi-modal brain network benchmarks: ABCD and PPMI. Detailed descriptions are provided in Appendix~\ref{app:dataset}.

\paragraph{Evaluation Metrics.}
To provide a holistic performance assessment, we utilize five standard metrics: Accuracy (Acc), Precision (Pre), Recall (Rec), F1-score (F1), and Area Under the ROC Curve (AUC). These metrics cover different aspects of classification performance, ensuring robustness against class imbalance. The precise mathematical formulations for all metrics are detailed in Appendix~\ref{app:metric}.

\paragraph{Baselines.}
We compare AFR-Net against a comprehensive set of $12$ baselines, categorized into two groups to demonstrate both general graph learning capabilities and domain-specific superiority:

\textbf{General Graph Learning Methods:} We include standard backbones such as MLP~\cite{rumelhart1986learning}, GCN~\cite{DBLP:conf/iclr/KipfW17}, GAT~\cite{DBLP:conf/iclr/VelickovicCCRLB18}, GraphSAGE~\cite{NIPS2017_5dd9db5e}, GIN~\cite{DBLP:conf/iclr/XuHLJ19}, and Graphormer~\cite{NEURIPS2021_f1c15925}.

\textbf{State-of-the-Art Brain Network Methods:} We compare against specialized models designed for brain connectivity analysis, including BrainGNN~\cite{LI2021102233}, Triplet~\cite{9857948}, BQN~\cite{yangwe}, AGT~\cite{pmlr-v235-cho24f}, Cross-GNN~\cite{10182318}, MaskGNN~\cite{QU2025103570}, RH-BrainFS~\cite{NEURIPS2023_b9c353d0}, and NeuroPath~\cite{NEURIPS2024_7d60f19a}.
Due to space limitations, the description of the baselines has been placed in Appendix~\ref{app:baseline}.

\subsection{Results and Analysis}
\label{subsec:analysis}

The classification performance of AFR-Net and all baseline methods on four datasets is summarized in Table~\ref{tab:main_result}. From the experimental results, several key observations can be made. First, generally superior performance is observed across all models on the PPMI and OCD datasets, whereas performance on the ADHD and Anx datasets is consistently lower. This suggests that the prediction tasks for Attention-Deficit/Hyperactivity Disorder and Anxiety are more challenging, potentially due to the subtle nature of the connectivity alterations associated with these conditions.

Second, it is noteworthy that traditional GNN-based methods (e.g., GCN, GAT, GIN) and methods specifically designed for brain networks such as Cross-GNN and MaskGNN yield suboptimal performance. This corroborates the limitations of local aggregation mechanisms when applied to brain networks. In contrast, RH-BrainFS, which introduces a fusion bottleneck to capture the coupling strength between FC and SC, and NeuroPath, which accounts for the topological detour phenomenon, achieve better results. These methods effectively address the issue of regional heterogeneity, thereby learning more expressive connectomic representations.

Crucially, our proposed AFR-Net outperforms all these baselines. This demonstrates that AFR-Net effectively captures how the interactions and communications between neural elements are constrained by the topological architecture of the SC, and how these dynamic patterns of information flow give rise to the FC. This underscores the superiority of fusing FC and SC representations from a more neuroscientifically grounded perspective.

Finally, it is worth noting that MLP performs competitively among the baselines, even surpassing some GNN methods. This finding aligns with the work of \citet{POPOV2024120909, yangwe, hou2025brainnetmlpefficienteffectivebaseline}, suggesting that increasing model complexity does not necessarily lead to performance gains if the underlying brain network is not modeled accurately.

\subsection{Interpretability Analysis}
\label{subsec:interpretability_analysis}
\input{figures/case_Anx}

To validate that AFR-Net captures biologically meaningful signal routing, we conduct a two-stage analysis on the Anxiety dataset.
First, we visualize the informational backbone by extracting the top 100 edges with the highest flow intensity ($\Phi_{ij}$). As shown in Figure~\ref{fig:interpretability}(a), these high-traffic edges are predominantly concentrated within the visual and somatomotor networks. This aligns with the brain's established functional hierarchy, where sensory-motor processing constitutes the dense ``structural core'' of neural communication~\cite{bullmore2009complex}, confirming that AFR-Net captures brain connectivity patterns consistent with established neuroscience, even without explicit supervision.

From Figure~\ref{fig:interpretability}(a), we can also observe disparities in connectivity patterns between the patient and healthy control (HC) groups. To quantify these differences, we perform a two-sample t-test to identify signal pathways with statistically significant deviations. As visualized in Figure~\ref{fig:interpretability}(b), the results demonstrate that for all significantly distinct edges, the information flow intensity in patients is consistently higher than that in healthy controls. This indicates an elevated communication load along these routes, revealing a distinct pattern of hyper-active information traffic within the Limbic-DMN (default mode network, a key brain system active during self-reflection and mind-wandering) connection. Specifically, the left hippocampus (L\_H) acts as a pathological hub, exhibiting heightened neural communication with the DMN and subcortical regions compared to HC. To interpret the cognitive implication of this hub, we perform functional decoding via Neurosynth~\cite{yarkoni2011large}. The results indicate that the L\_H region is most strongly associated with cognitive terms such as ``autobiographical", ``episodic", and ``retrieval". Synthesizing our connectivity findings with this functional evidence offers a mechanistic explanation for anxiety. Existing neuroscience literature posits that anxiety disorders are characterized by pathological rumination—the repetitive rehearsal of negative past events~\cite{hamilton2015depressive}. 
While the DMN supports self-referential processing, the hippocampus is critical for episodic memory retrieval. The hyper-active flow we observed from the Hippocampus to the DMN strongly suggests a maladaptive circuit where intrusive autobiographical memories are relentlessly retrieved and projected into self-referential loops. 
This finding is consistent with the triple network model of psychopathology~\cite{menon2011large}, which implicates dysregulated switching between memory and self-processing circuits in internalizing disorders.
By uncovering this specific circuit, AFR-Net demonstrates its value in generating testable, mechanically grounded neuroscientific hypotheses.

\input{tables/ablation}

\subsection{Ablation Study}
\label{subsec:ablation}

To validate the contribution of each component in AFR-Net, we conducted an ablation study on the three ABCD datasets. We compared the full model against two variants: (1) \textit{w/o ERD}, which replaces the effective resistance distance with standard positional encodings; and (2) \textit{w/o Flow Routing}, which removes the adaptive flow layer and relies solely on the Transformer encoder for classification. The results are summarized in Table~\ref{tab:ablation}.

The removal of the flow routing module results in the most significant performance degradation across all tasks. Notably, on the Anxiety dataset, the F1-score drops precipitously from $62.15\%$ to $49.70\%$, and in OCD, it falls by over $10\%$. This sharp decline underscores that the flow routing module is indispensable as it dynamically simulates the global energy equilibrium, allowing the model to uncover latent communication pathways driven by functional demands that are otherwise invisible to standard topological aggregators. The variant without ERD encoding also exhibits a consistent performance drop compared to the full model. This verifies our hypothesis that brain communication relies on parallel redundancy rather than just shortest paths. By encoding ERD, the model gains a global view of structural robustness from the initialization stage.

\subsection{Efficiency Analysis}

It is worth noting that because AFR-Net incorporates the inverse of the Laplacian and employs a graph transformer—which is generally considered inefficient in practice. We analyze its time complexity in the Appendix~\ref{app:complexity} and demonstrate that its computational overhead remains tractable in real-world settings in Figure~\ref{fig:acc_time}.


%% file: tables/main_result.tex
\begin{table*}[htbp]
\centering
\caption{Performance comparison of different methods on ABCD and PPMI datasets (mean $\pm$ std \%). Due to space limitations, only F1 and AUC scores are reported here. The complete results are available in the Appendix \ref{app:full_exp}. The best results are highlighted in \textbf{bold}, and the second-best results are \underline{underlined}.
}
\label{tab:main_result}
\resizebox{\textwidth}{!}{
\begin{tabular}{ccccccccc}
\toprule
\multirow{2}{*}{Method} & \multicolumn{2}{c}{ABCD-OCD} & \multicolumn{2}{c}{ABCD-ADHD} & \multicolumn{2}{c}{ABCD-Anx} & \multicolumn{2}{c}{PPMI} \\
\cmidrule(lr){2-3} \cmidrule(lr){4-5} \cmidrule(lr){6-7} \cmidrule(lr){8-9}
 & F1 & AUC & F1 & AUC & F1 & AUC & F1 & AUC \\
\midrule
MLP & $60.48 \pm 2.27$ & $68.62 \pm 0.88$ & $61.75 \pm 3.17$ & $57.55 \pm 5.06$ & $53.03 \pm 2.44$ & $56.83 \pm 0.65$ & $54.00 \pm 9.10$ & $70.52 \pm 0.79$ \\
GCN & $52.26 \pm 10.45$ & $47.11 \pm 3.13$ & $51.28 \pm 21.88$ & $50.81 \pm 9.23$ & $52.47 \pm 8.95$ & $45.02 \pm 3.33$ & $68.79 \pm 1.37$ & $64.11 \pm 5.57$ \\
GAT & $59.90 \pm 2.18$ & $63.58 \pm 1.10$ & $59.59 \pm 2.77$ & $59.31 \pm 1.37$ & $54.95 \pm 3.45$ & $54.37 \pm 1.51$ & $70.19 \pm 0.88$ & $70.83 \pm 4.25$ \\
GIN & $42.93 \pm 11.59$ & $46.97 \pm 6.20$ & $58.31 \pm 5.85$ & $44.37 \pm 4.37$ & $49.50 \pm 10.60$ & $48.01 \pm 0.78$ & $71.43 \pm 0.00$ & $50.00 \pm 0.00$ \\
GraphSAGE & $59.76 \pm 2.12$ & $65.81 \pm 2.12$ & $47.81 \pm 5.84$ & $52.60 \pm 3.32$ & $53.75 \pm 0.54$ & $56.15 \pm 1.84$ & $73.76 \pm 4.04$ & $70.31 \pm 7.15$ \\
Graphormer & $59.40 \pm 4.73$ & $68.01 \pm 1.19$ & $55.19 \pm 5.09$ & $58.79 \pm 2.17$ & $51.34 \pm 0.87$ & $51.71 \pm 0.27$ & $72.49 \pm 1.55$ & $61.77 \pm 5.02$ \\
\midrule
Triplet & $43.52 \pm 4.14$ & $52.56 \pm 0.03$ & $47.91 \pm 5.07$ & $48.84 \pm 3.72$ & $39.56 \pm 7.83$ & $48.07 \pm 0.91$ & $53.17 \pm 25.82$ & $66.25 \pm 9.51$ \\
AGT & $51.52 \pm 5.47$ & $54.18 \pm 3.17$ & $41.91 \pm 7.66$ & $53.55 \pm 4.10$ & $38.54 \pm 9.82$ & $52.51 \pm 1.10$ & $64.29 \pm 10.10$ & $61.20 \pm 1.69$ \\
BQN & $53.29 \pm 18.91$ & $57.29 \pm 3.26$ & $53.41 \pm 18.75$ & $57.11 \pm 0.68$ & $53.35 \pm 18.84$ & $51.30 \pm 0.69$ & $56.36 \pm 18.10$ & $60.73 \pm 13.60$ \\
BrainGNN & $54.82 \pm 2.79$ & $53.53 \pm 0.10$ & $51.35 \pm 20.84$ & $54.37 \pm 5.89$ & $47.71 \pm 3.10$ & $55.38 \pm 4.06$ & $71.02 \pm 4.62$ & $71.30 \pm 4.41$ \\
Cross-GNN & $58.21 \pm 1.75$ & $58.31 \pm 1.89$ & $52.76 \pm 3.79$ & $56.24 \pm 3.94$ & $50.08 \pm 2.05$ & $50.56 \pm 1.25$ & $63.51 \pm 5.42$ & $65.83 \pm 8.74$ \\
MaskGNN & $58.42 \pm 2.47$ & $63.30 \pm 1.12$ & $49.22 \pm 1.50$ & $61.58 \pm 2.05$ & $50.36 \pm 3.40$ & $52.67 \pm 0.72$ & $73.50 \pm 1.11$ & $68.23 \pm 0.74$ \\
RH-BrainFS & $59.55 \pm 3.15$ & $59.69 \pm 3.12$ & $61.59 \pm 2.94$ & $61.76 \pm 2.61$ & $53.69 \pm 5.26$ & $54.75 \pm 5.36$ & $68.54 \pm 3.15$ & $68.25 \pm 3.06$ \\
NeuroPath & \underline{$64.13 \pm 2.10$} & \underline{$69.39 \pm 0.15$} & \underline{$62.36 \pm 3.71$} & $\mathbf{64.19 \pm 0.35}$ & \underline{$55.56 \pm 2.24$} & \underline{$57.65 \pm 2.07$} & \underline{$74.56 \pm 0.76$} & \underline{$71.88 \pm 1.13$} \\
\midrule
\rowcolor{gray!25}
Ours & $\mathbf{70.62 \pm 2.36}$ & $\mathbf{72.68 \pm 1.78}$ & $\mathbf{65.50 \pm 2.35}$ & \underline{$62.38 \pm 1.08$} & $\mathbf{62.15 \pm 2.41}$ & $\mathbf{59.03 \pm 1.94}$ & $\mathbf{83.70 \pm 3.85}$ & $\mathbf{90.38 \pm 1.64}$ \\
\bottomrule
\end{tabular}
}
\end{table*}

%% file: figures/case_Anx.tex
\begin{figure*}[t]
    \centering
    \includegraphics[width=\textwidth]{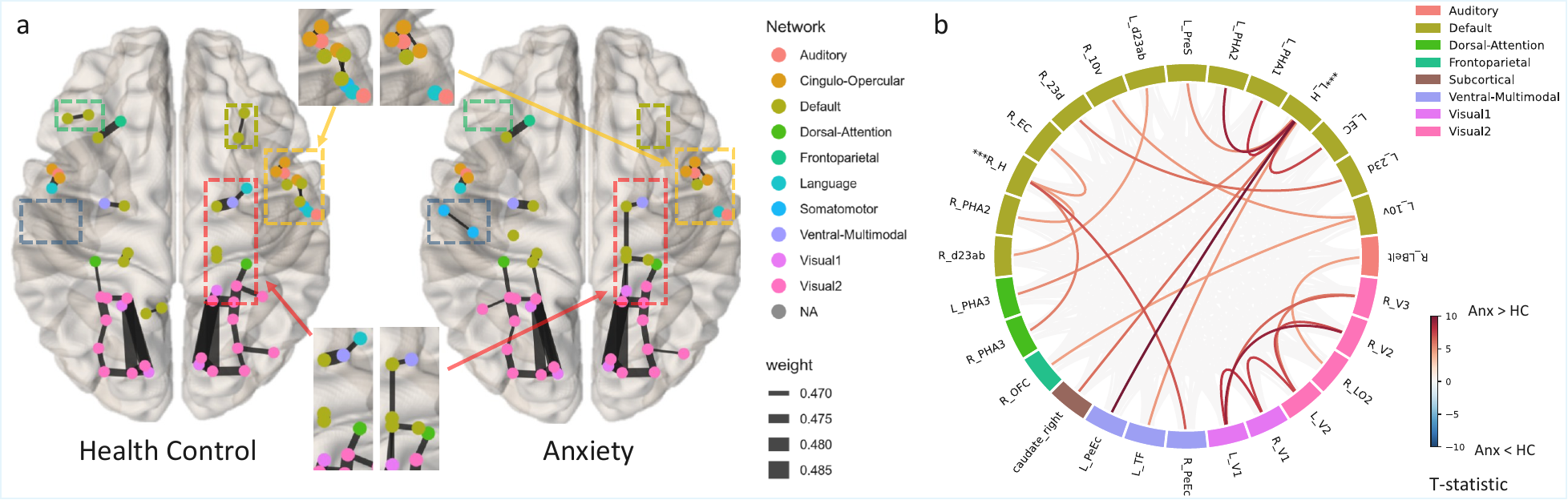}
    \caption{(a) Visualization of the top 100 edges with the highest information flow intensity ($\Phi_{ij}$) averaged across all subjects. The model autonomously identifies the visual and somatomotor networks as the structural core of brain communication.
    (b) Circle plot showing edges with significant group differences ($p < 0.05$, FDR corrected). Red lines denote edges where the patient exhibits higher information flow intensity than HC, whereas blue lines indicate the opposite. The left hippocampus (\textbf{L\_H}, marked with ***) emerges as a pathological hub, exhibiting dense hyper-connectivity with the default mode network and subcortical regions in patient group.}
    \label{fig:interpretability}
\end{figure*}

%% file: tables/ablation.tex
\begin{table}[t]
  \centering
  \caption{Ablation study of the main modules in our proposed AFR-Net method. The best results are highlighted in bold.}
  \label{tab:ablation}
  \resizebox{\linewidth}{!}{
  \begin{tabular}{ccccc}
    \toprule
    Dataset & Metric & w/o ERD & w/o Flow Routing & Full Model \\
    \midrule
    \multirow{2}{*}{OCD}
    & F1 & $67.74 \pm 2.25$ & $60.53 \pm 6.23$ & $\mathbf{70.62 \pm 2.36}$ \\
    & AUC & $71.73 \pm 2.34$ & $68.21 \pm 1.38$ & $\mathbf{72.68 \pm 1.78}$ \\
    \midrule
    \multirow{2}{*}{ADHD}
    & F1 & $63.70 \pm 2.30$ & $60.12 \pm 2.57$ & $\mathbf{65.50 \pm 2.35}$ \\
    & AUC & $60.90 \pm 1.80$ & $59.23 \pm 1.75$ & $\mathbf{62.38 \pm 1.08}$ \\
    \midrule
    \multirow{2}{*}{Anx}
    & F1 & $52.95 \pm 8.29$ & $49.70 \pm 4.48$ & $\mathbf{62.15 \pm 2.41}$ \\
    & AUC & $58.95 \pm 1.46$ & $58.00 \pm 2.63$ & $\mathbf{59.03 \pm 1.94}$ \\
    \bottomrule
  \end{tabular}
  }
\end{table}

%% file: sections/6_conclusion.tex

\section{Conclusion}\label{sec:conclusion}

We reconceptualize the coupling of structural and functional connectivity not as a static mapping, but as a dynamic, energy-optimal flow process. By integrating physics-informed graph construction with a differentiable flow solver, AFR-Net uncovers the critical pathways underlying neural communication. Beyond achieving state-of-the-art performance across large-scale benchmarks, AFR-Net bridges the chasm between predictive power and biological interpretability. By elucidating the specific routing patterns driving pathological alterations, our framework transforms deep learning from a black-box predictor into a mechanistic instrument for neuroscientific discovery.

\textbf{Limitations and Future Work.} While our closed-form solution provides theoretical guarantees, the computation of the graph Laplacian pseudo-inverse entails a cubic time complexity $\mathcal{O}(N^3)$, which may constrain scalability to voxel-level analysis involving tens of thousands of nodes. Future work will explore approximation methods, such as spectral sparsification or Nyström extensions, to mitigate this bottleneck. Additionally, while our current model views Functional Connectivity as a static demand, incorporating temporal dynamics from fMRI time-series could allow for modeling time-varying flow routing, further closing the loop between structural constraints and dynamic cognition.

\input{figures/acc_time}

%% file: figures/acc_time.tex
\begin{figure}[t]
    \centering
    \includegraphics[width=0.9\linewidth]{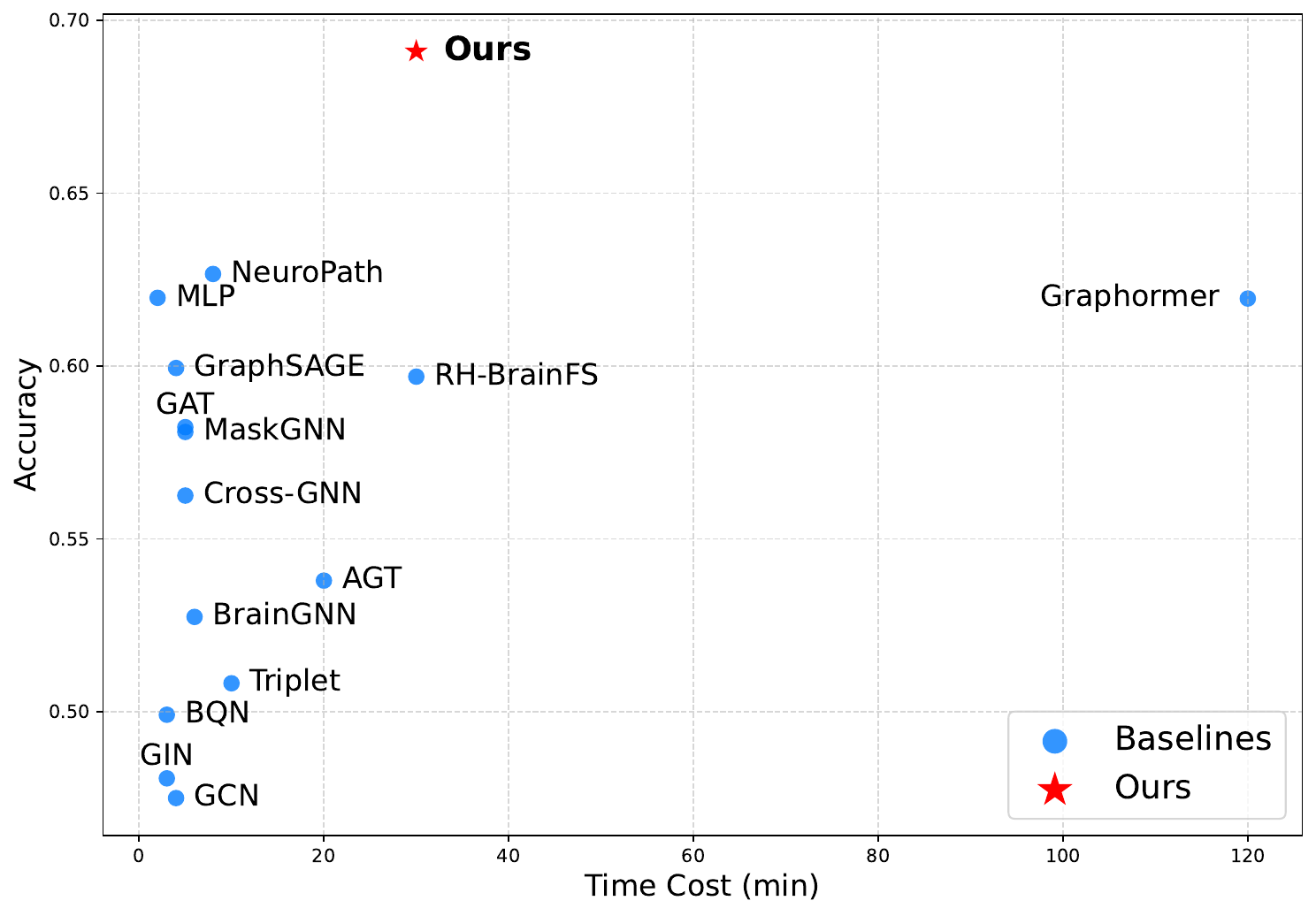}
    \caption{Accuracy vs. Time Cost on ABCD dataset over 50 epochs for each method. Our method achieves the best trade-off between performance and efficiency.}
    \label{fig:acc_time}
\end{figure}

%% file: sections/appendix.tex
\onecolumn

\section{Summary of Notations}\label{sec:notation}
Here, we summarize the notations used throughout the paper in Table \ref{tab:notations}.
\input{tables/notation}

\section{Theoretical Proofs}
\label{app:proofs}

In this section, we provide the detailed mathematical derivation for the closed-form solution of the total information flow presented in Eq.~\eqref{eq:final_flow}.

\begin{theorem}[Closed-Form Solution of Total Information Flow]
\label{thm:energy}
Given a structural flow network characterized by the regularized Laplacian $\mathbf{L}_{\text{flow}} = \mathbf{B}^\top \mathbf{C} \mathbf{B} + \delta \mathbf{I}$ and a functional demand matrix $|\mathbf{A}_{\text{fc}}|$ with its corresponding Laplacian $\mathbf{L}_{\text{fc}}$. Let $c_{ij}$ be the capacity of edge $e_{ij} \in \mathbf{A}_{\text{sc}}$. The total expected information flow $\Phi_{ij}$, aggregated over all communicating pairs $(s,t)$ weighted by demand $|(\mathbf{A}_{\text{fc}})_{st}|$, is given by:
\begin{equation}
    \Phi_{ij} = 2 c_{ij} (\mathbf{e}_i - \mathbf{e}_j)^\top \left( \mathbf{L}_{\text{flow}}^{-1} \mathbf{L}_{\text{fc}} \mathbf{L}_{\text{flow}}^{-1} \right) (\mathbf{e}_i - \mathbf{e}_j).
\end{equation}
\end{theorem}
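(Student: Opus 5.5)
The plan is to rewrite each single-pair intensity $I_{ij}^{(st)}$ as a quadratic form in the edge vector $\mathbf{b} := \mathbf{e}_i - \mathbf{e}_j$. Then the demand-weighted sum over $(s,t)$ becomes a quadratic form against a single matrix, and we identify that matrix with the functional Laplacian. The argument needs only linearity and the symmetry of $\mathbf{L}_{\text{flow}}$, plus the rank-one decomposition of a Laplacian recalled in the Preliminaries, applied now to the weights $|(\mathbf{A}_{\text{fc}})_{st}|$.

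First, since $\mathbf{C}$ is diagonal with $c_{ij}>0$ and $\delta>0$, the matrix $\mathbf{L}_{\text{flow}} = \mathbf{B}^\top\mathbf{C}\mathbf{B} + \delta\mathbf{I}$ is symmetric positive definite. Hence $\mathbf{L}_{\text{flow}}^{-1}$ exists and is symmetric. Write $\mathbf{u}_{st} := \mathbf{e}_s - \mathbf{e}_t$. The scalar $\mathbf{b}^\top \mathbf{L}_{\text{flow}}^{-1}\mathbf{u}_{st}$ equals its own transpose $\mathbf{u}_{st}^\top \mathbf{L}_{\text{flow}}^{-1}\mathbf{b}$. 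Therefore
\begin{equation*}
I_{ij}^{(st)} = c_{ij}\,\mathbf{b}^\top \mathbf{L}_{\text{flow}}^{-1}\,\mathbf{u}_{st}\mathbf{u}_{st}^\top\,\mathbf{L}_{\text{flow}}^{-1}\mathbf{b}.
\end{equation*}
Next, because $c_{ij}$, $\mathbf{b}$ and $\mathbf{L}_{\text{flow}}^{-1}$ do not depend on $(s,t)$, we pull them outside the sum. This gives
\begin{equation*}
\Phi_{ij} = c_{ij}\,\mathbf{b}^\top \mathbf{L}_{\text{flow}}^{-1}\,\mathbf{W}\,\mathbf{L}_{\text{flow}}^{-1}\mathbf{b},
\end{equation*}
with the demand matrix
\begin{equation*}
\mathbf{W} := \sum_{(s,t)} |(\mathbf{A}_{\text{fc}})_{st}|\,\mathbf{u}_{st}\mathbf{u}_{st}^\top .
\end{equation*}

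The remaining step is to show $\mathbf{W} = 2\mathbf{L}_{\text{fc}}$, where $\mathbf{L}_{\text{fc}} = \mathbf{D}_{\text{fc}} - |\mathbf{A}_{\text{fc}}|$ and $\mathbf{D}_{\text{fc}}$ is the diagonal matrix of row sums of $|\mathbf{A}_{\text{fc}}|$. I will read the sum ``$(s,t)\in\mathbf{A}_{\text{fc}}$'' as ranging over ordered pairs, i.e. all entries of the matrix. Diagonal terms $s=t$ contribute nothing, since $\mathbf{u}_{ss}=\mathbf{0}$, so any self-correlation on the diagonal of $\mathbf{A}_{\text{fc}}$ is harmless. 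Expanding $\mathbf{u}_{st}\mathbf{u}_{st}^\top = \mathbf{e}_s\mathbf{e}_s^\top + \mathbf{e}_t\mathbf{e}_t^\top - \mathbf{e}_s\mathbf{e}_t^\top - \mathbf{e}_t\mathbf{e}_s^\top$ and summing with the symmetric weights $|(\mathbf{A}_{\text{fc}})_{st}|$ splits $\mathbf{W}$ into two parts. The two diagonal pieces each give $\mathbf{D}_{\text{fc}}$, and the two off-diagonal pieces each give $-|\mathbf{A}_{\text{fc}}|$, with diagonal entries cancelling against the corresponding degree contributions. Hence $\mathbf{W} = 2(\mathbf{D}_{\text{fc}} - |\mathbf{A}_{\text{fc}}|) = 2\mathbf{L}_{\text{fc}}$. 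Equivalently, each unordered pair appears twice and contributes once to the Laplacian decomposition $\mathbf{L}_{\text{fc}} = \sum_{s<t} |(\mathbf{A}_{\text{fc}})_{st}|\,\mathbf{u}_{st}\mathbf{u}_{st}^\top$. Substituting back yields the stated formula.

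I expect the main obstacle to be bookkeeping rather than analysis: fixing the summation convention so that the factor $2$ comes out correctly. The ordered-pair reading produces it exactly, whereas an unordered-pair reading would give $\mathbf{W} = \mathbf{L}_{\text{fc}}$ without the $2$. I also need to make explicit that $\mathbf{D}_{\text{fc}}$ is built from $|\mathbf{A}_{\text{fc}}|$, so that signed correlations do not break the identity. I will state these conventions at the start of the proof. Finally, I will remark that the result is independent of the chosen edge orientation, because $\mathbf{b}$ enters quadratically.
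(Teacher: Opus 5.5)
Your proposal is correct and follows the paper's proof essentially step for step. Both arguments rewrite $I_{ij}^{(st)}$ as a quadratic form in $\mathbf{L}_{\text{flow}}^{-1}(\mathbf{e}_i-\mathbf{e}_j)$, pull the $(s,t)$-independent factors out of the sum, and identify the demand-weighted sum of $(\mathbf{e}_s-\mathbf{e}_t)(\mathbf{e}_s-\mathbf{e}_t)^\top$ over ordered pairs as $2\mathbf{L}_{\text{fc}}$. Your ordered-pair convention matches the paper's $\sum_{s=1}^N\sum_{t=1}^N$, and you make explicit two points the paper leaves implicit: the symmetry of $\mathbf{L}_{\text{flow}}^{-1}$, and the fact that $\mathbf{D}_{\text{fc}}$ is built from $|\mathbf{A}_{\text{fc}}|$.
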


\begin{proof}
Recall the definition of the information
flow intensity $I_{ij}^{(st)}$ induced on edge $e_{ij}$ caused by a unit flow demand from source $s$ to target $t$. Based on the global flow equilibrium (analogous to Ohm's law on graphs \cite{ohm1827galvanische}), the potential distribution is given by the linear system $\mathbf{L}_{\text{flow}} \mathbf{v}^{(st)} = (\mathbf{e}_s - \mathbf{e}_t)$. Since $\mathbf{L}_{\text{flow}}$ is regularized by $\delta \mathbf{I}$, it is strictly positive definite and invertible, yielding the unique solution $\mathbf{v}^{(st)} = \mathbf{L}_{\text{flow}}^{-1} (\mathbf{e}_s - \mathbf{e}_t)$. Note that as $\delta \to 0$, this approaches the pseudoinverse solution $\mathbf{L}^\dagger(\mathbf{e}_s - \mathbf{e}_t)$ for connected components where $\mathbf{1}^\top(\mathbf{e}_s - \mathbf{e}_t) = 0$.

The flow intensity is defined as capacity times the squared potential difference:
\begin{equation}
    I_{ij}^{(st)} = c_{ij} \left[ (\mathbf{e}_i - \mathbf{e}_j)^\top \mathbf{L}_{\text{flow}}^{-1} (\mathbf{e}_s - \mathbf{e}_t) \right]^2.
\end{equation}

The total information flow $\Phi_{ij}$ is the weighted sum of microscopic intensities over all possible pairs:
\begin{equation}
    \Phi_{ij} = \sum_{s=1}^N \sum_{t=1}^N |(\mathbf{A}_\text{fc})_{st}| \cdot I_{ij}^{(st)}
\end{equation}

Let $\mathbf{w}_{ij} = \mathbf{L}_{\text{flow}}^{-1} (\mathbf{e}_i - \mathbf{e}_j)$ denote the structural response vector specific to edge $e_{ij}$. The term inside the square bracket becomes $\mathbf{w}_{ij}^\top (\mathbf{e}_s - \mathbf{e}_t)$. Using the trace property of scalar values ($x^2 = \text{Tr}(xx^\top)$), we rewrite $I_{ij}^{(st)}$:
\begin{equation}
    I_{ij}^{(st)} = c_{ij} \text{Tr}\left( \mathbf{w}_{ij}^\top (\mathbf{e}_s - \mathbf{e}_t) (\mathbf{e}_s - \mathbf{e}_t)^\top \mathbf{w}_{ij} \right) = c_{ij} \mathbf{w}_{ij}^\top \left[ (\mathbf{e}_s - \mathbf{e}_t)(\mathbf{e}_s - \mathbf{e}_t)^\top \right] \mathbf{w}_{ij}.
\end{equation}

Substituting this back into the summation for $\Phi_{ij}$:
\begin{equation}\label{eq:20}
    \Phi_{ij} = c_{ij} \mathbf{w}_{ij}^\top \left[ \sum_{s,t\in \mathbf{A}_\text{fc}} |(\mathbf{A}_\text{fc})_{st}| (\mathbf{e}_s - \mathbf{e}_t)(\mathbf{e}_s - \mathbf{e}_t)^\top \right] \mathbf{w}_{ij}.
\end{equation}

We now analyze the term inside the bracket. Let $\mathbf{Q} = \sum_{s,t} |(\mathbf{A}_\text{fc})_{st}| (\mathbf{e}_s - \mathbf{e}_t)(\mathbf{e}_s - \mathbf{e}_t)^\top$. We expand the outer product:
\begin{equation}
    (\mathbf{e}_s - \mathbf{e}_t)(\mathbf{e}_s - \mathbf{e}_t)^\top = \mathbf{e}_s \mathbf{e}_s^\top - \mathbf{e}_s \mathbf{e}_t^\top - \mathbf{e}_t \mathbf{e}_s^\top + \mathbf{e}_t \mathbf{e}_t^\top.
\end{equation}
The matrix $\mathbf{Q}$ corresponds to the definition of the graph Laplacian for a graph with adjacency matrix $|\mathbf{A}_{\text{fc}}|$. Specifically, the $(i,j)$-th entry of $\mathbf{Q}$ (where $i \neq j$) collects terms $-(\mathbf{A}_{\text{fc}})_{ij}$ and $-(\mathbf{A}_{\text{fc}})_{ji}$. Since $\mathbf{A}_{\text{fc}}$ is symmetric ($(\mathbf{A}_{\text{fc}})_{ij} = (\mathbf{A}_{\text{fc}})_{ji}$), the off-diagonal entry is $-2|(\mathbf{A}_{\text{fc}})_{ij}|$. The diagonal entry $Q_{ii}$ sums up degrees. Thus, we identify:
\begin{equation}
    \mathbf{Q} = 2 \mathbf{L}_{\text{fc}},
\end{equation}
where $\mathbf{L}_{\text{fc}}$ is the Laplacian of the functional demand graph.

Finally, substituting $\mathbf{Q} = 2 \mathbf{L}_{\text{fc}}$ and the definition of $\mathbf{w}_{ij}$ back into Eq. \eqref{eq:20}:
\begin{equation}
    \Phi_{ij} = c_{ij} \mathbf{w}_{ij}^\top (2 \mathbf{L}_{\text{fc}}) \mathbf{w}_{ij} = 2 c_{ij} (\mathbf{e}_i - \mathbf{e}_j)^\top \mathbf{L}_{\text{flow}}^{-1} \mathbf{L}_{\text{fc}} \mathbf{L}_{\text{flow}}^{-1} (\mathbf{e}_i - \mathbf{e}_j).
\end{equation}
\end{proof}

\section{Computational Complexity Analysis}
\label{app:complexity}

While AFR-Net incorporates a global flow computation involving the inversion of the Laplacian matrix, we ensure high computational efficiency through algorithmic optimizations and the intrinsic properties of brain networks.

\paragraph{Theoretical Analysis.}
The computational bottleneck of our routing module lies in evaluating the quadratic form $\mathbf{x}^\top \mathbf{L}_{\text{flow}}^{-1} \mathbf{L}_{\text{fc}} \mathbf{L}_{\text{flow}}^{-1} \mathbf{x}$. A naive inversion of $\mathbf{L}_{\text{flow}}$ would incur a complexity of $\mathcal{O}(N^3)$. However, we leverage the positive-definiteness of the regularized Laplacian $\mathbf{L}_{\text{flow}} = \mathbf{B}^\top \mathbf{C} \mathbf{B} + \delta \mathbf{I}$ to employ the Cholesky decomposition $\mathbf{L}_{\text{flow}} = \mathbf{L}\mathbf{L}^\top$.

\begin{itemize}
    \item Forward Pass (Amortized Cost): The Cholesky factorization is computed once per graph with a complexity of $\frac{1}{3}N^3$. Once factorized, solving the linear system $\mathbf{L}_{\text{flow}} \mathbf{z} = \mathbf{x}$ for latent representations reduces to forward and backward substitutions, costing $\mathcal{O}(N^2 d)$.
    \item Scale Analysis: In the context of brain network analysis, the number of nodes $N$ corresponds to Regions of Interest (ROIs), which typically ranges from 100 to 400 (e.g., AAL or Schaefer atlases). In contrast, the hidden dimension $d$ in Transformers is often large (e.g., 64-512). Consequently, the cubic term $N^3$ is comparable in magnitude to the quadratic term $N^2 d$ dominant in the self-attention mechanism. Modern GPUs are highly optimized for such dense matrix operations, rendering the overhead of the flow module manageable.
    \item Backward Pass (Implicit Differentiation): Crucially, we do not need to backpropagate through the unrolled steps of the Cholesky factorization or the linear solver. Instead, we utilize implicit differentiation via the adjoint method. The gradient $\partial \mathcal{L} / \partial \mathbf{L}_{\text{flow}}$ can be computed by solving a symmetric linear system of the same scale, maintaining the same $\mathcal{O}(N^3)$ complexity class as the forward pass while significantly reducing memory consumption.
\end{itemize}

\paragraph{Comparison with Baselines.}
The standard Graphormer encoder has a complexity of $\mathcal{O}(L \cdot N^2 \cdot d + L \cdot N^2 \cdot K)$, where $L$ is the number of layers and $K$ is the path length for edge encoding. Since AFR-Net performs the flow computation only once (or once per layer depending on architecture) compared to the multi-head attention computed $L \times H$ times, the relative overhead is minimal.



We also report the running times of all methods on the ABCD-OCD dataset over 50 epochs in Figure~\ref{fig:acc_time}. As shown, our method exhibits comparable computational efficiency to RH-BrainFS, with runtime that remains well within practical limits. To conclude, AFR-Net achieves the best trade-off between accuracy and training time.

\section{Details of Experimental Setup}\label{sec:exp_setup}
\subsection{Datasets}\label{app:dataset}

We evaluate AFR-Net on two large-scale multi-modal neuroimaging benchmarks: the Adolescent Brain Cognitive Development study and the Parkinson's Progression Markers Initiative.

\paragraph{Adolescent Brain Cognitive Development (ABCD).}
The ABCD study is the largest long-term prospective longitudinal study of brain development and child health in the United States. It recruited a representative sample of 11,875 children aged 9-10 years from 21 research sites across the country. In this work, we utilized baseline data from Data Release 6.0. We applied rigorous inclusion criteria, excluding participants with missing neuroimaging (rs-fMRI or d-MRI) or behavioral data, as well as those flagged for poor data quality, resulting in a final cohort of 6,381 participants. For image processing, we utilized rs-fMRI data processed according to the standardized ABCD pipeline, which includes motion correction, $B_0$ distortion correction, and gradient nonlinearity distortion correction. Motion confounds were minimized using iterative smoothing, regression of motion parameters, and frame censoring. The d-MRI preprocessing was implemented using MRtrix3, involving MP-PCA denoising, reverse phase encoding, and corrections for susceptibility, eddy currents, and motion. We employed the Dhollander algorithm to estimate tissue response functions (CSF, gray matter, white matter) and computed Fiber Orientation Distributions via spherical deconvolution. Structural connectivity was generated via probabilistic tractography (iFOD2 algorithm) with 10 million streamlines, followed by SIFT2 filtering to reconstruct biologically accurate streamline weights. Finally, consistent with our framework's requirements, we used the HCP-MMP1.0 (Glasser) and FreeSurfer (Aseg) atlases to parcellate the brain, extracting the functional connectivity matrix from time-series correlations and the structural connectivity matrix from streamline counts.

\paragraph{Parkinson's Progression Markers Initiative (PPMI).}
The PPMI is a landmark observational clinical study aimed at identifying biomarkers for Parkinson's Disease progression. We selected a subset of subjects who possessed both Diffusion Tensor Imaging (DTI) and Resting-State fMRI (rs-fMRI) data at the baseline visit, resulting in a cohort of 90 subjects classified into PD patients and Healthy Controls (HC). The rs-fMRI data underwent standard preprocessing using FSL and AFNI pipelines, including slice timing correction, rigid-body head motion correction, spatial normalization to MNI152 space, Gaussian smoothing, and band-pass filtering. Nuisance signals (motion parameters, CSF, and white matter signals) were regressed out. For d-MRI, raw data were corrected for eddy currents and head motion using FSL's eddy tool. Deterministic tractography was performed using the Fiber Assignment by Continuous Tracking algorithm to map white matter tracts. For feature extraction, we utilized the Automated Anatomical Labeling (AAL) atlas to parcellate the brain into 90 regions of interest. Based on this parcellation scheme, the functional connectivity matrix was derived from the Pearson correlation of the ROI-averaged time series, and the structural connectivity matrix was constructed by counting the number of fiber tracts connecting each pair of ROIs.

\subsection{Baseline Models}\label{app:baseline}
We compare AFR-Net against $12$ competitive baselines, categorized into general graph learning methods and state-of-the-art brain network analysis methods.
\begin{itemize}
    \item \textbf{Multi-Layer Perceptron (MLP)~\cite{rumelhart1986learning}:} A standard deep learning baseline that flattens the upper triangular part of the connectivity matrices into a vector and feeds it into fully connected layers, ignoring the graph topological structure.
    \item \textbf{Graph Convolutional Network (GCN)~\cite{DBLP:conf/iclr/KipfW17}:} GCN is a seminal spectral-based graph neural network that extends the convolution operation to graph-structured data. It updates node representations by aggregating features from the immediate neighborhood using a fixed, isotropic normalization rule based on the degree matrix.
    \item \textbf{Graph Attention Network (GAT)~\cite{DBLP:conf/iclr/VelickovicCCRLB18}:} GAT introduces a self-attention mechanism to learn the importance of neighbors dynamically. It computes anisotropic attention coefficients between connected brain regions, allowing the model to focus on more relevant connections during the message-passing phase.
    \item \textbf{GraphSAGE~\cite{NIPS2017_5dd9db5e}:} This method provides a general inductive framework that generates node embeddings by sampling and aggregating features from a node’s local neighborhood. Instead of training individual embeddings for each node, GraphSAGE learns a function that aggregates feature information from local neighbors.
    \item \textbf{Graph Isomorphism Network (GIN)~\cite{DBLP:conf/iclr/XuHLJ19}:} GIN is designed to maximize the expressive power of GNNs, theoretically achieving discriminative capability equivalent to the Weisfeiler-Lehman (WL) graph isomorphism test. It employs a summation aggregator followed by a Multi-Layer Perceptron to ensure that distinct graph structures map to different embeddings, making it highly effective for graph-level classification tasks in brain network analysis.
    \item \textbf{Graphormer~\cite{NEURIPS2021_f1c15925}:} As a representative of Graph Transformers, Graphormer adapts the standard Transformer architecture to graph data. It incorporates centrality encoding and spatial encoding to effectively capture both the structural information and long-range dependencies between brain regions. This allows the model to surpass the limitation of the receptive field in conventional message-passing GNNs.
    \item \textbf{Triplet~\cite{9857948}:} It employs a triplet attention network architecture to learn discriminative high-order representations and relationships among samples, utilizing attention mechanisms to fuse functional and structural features.
    \item \textbf{AGT~\cite{pmlr-v235-cho24f}:} AGT proposes an adaptive graph diffusion framework designed to capture diverse connectivity patterns in brain networks by learning node-wise spectral scales. It introduces a novel group-level temporal regularization which explicitly constraints the centroids of diagnostic groups in latent space to follow their natural progressive order, thereby effectively encoding the irreversible temporal dynamics of neurodegeneration into downstream task.
    \item \textbf{BQN~\cite{yangwe}} BQN challenges the necessity of the conventional message-passing mechanism in brain network modeling. It employs a Quadratic Network architecture based on the Hadamard product rather than matrix multiplication which allows it to capture high-order nonlinear interactions and implicitly perform community detection via non-negative matrix factorization logic, achieving superior computational efficiency and performance without explicit message passing.
    \item \textbf{BrainGNN~\cite{LI2021102233}:} BrainGNN introduces ROI-aware Graph Convolutional layers, which assign distinct parameters to each ROI to characterize their specific functional patterns. Furthermore, it incorporates a differentiable ROI-selection pooling layer to retain salient nodes with high projection scores while pruning irrelevant ones.
    \item \textbf{Cross-GNN~\cite{10182318}:}  Cross-GNN introduces a dynamic graph learning mechanism where an inter-modal correspondence matrix is learned to model the intrinsic dependencies between fMRI and DTI features, serving as a dynamic adjacency matrix. Furthermore, it incorporates a cross-distillation strategy to regularize latent representations via mutual learning between mono-modal and multi-modal pathways, enhancing the model's generalization ability.
    \item \textbf{MaskGNN~\cite{QU2025103570}:} MaskGNN employs a learnable edge mask mechanism to dynamically weight the importance of neural connections within a unified graph structure constructed via the Glasser atlas. Furthermore, it incorporates manifold regularization and orthonormality constraints to preserve local topological structures and ensure the learning of stable, independent features.
    \item \textbf{RH-BrainFS~\cite{NEURIPS2023_b9c353d0}:} RH-BrainFS addresses the "regional heterogeneity" issue, where the coupling strength between structural and functional modalities varies across different brain regions. It employs a brain subgraph neural network to extract local regional characteristics via rooted subgraph embedding and introduces a transformer-based fusion bottleneck module that facilitates indirect interaction between modalities using learnable bottleneck tokens.
    \item \textbf{NeuroPath~\cite{NEURIPS2024_7d60f19a}:} NeuroPath introduces the concept of "topological detour" to characterize how functional interactions are physically supported by multi-hop structural pathways. The model employs a twin-branch architecture with a specialized Multi-Head Self-Attention mechanism that filters attention maps using detour adjacency matrices, enforcing consistency between structural pathways and functional correlations to learn expressive connectomic representations.
\end{itemize}

\subsection{Evaluation Metrics}\label{app:metric}
To comprehensively evaluate the classification performance of our proposed AFR-Net and the baseline methods, we employ five standard metrics: Accuracy (Acc), Precision (Pre), Recall (Rec), F1-score (F1), and Area Under the ROC Curve (AUC).

Let $TP$, $TN$, $FP$, and $FN$ denote the number of True Positives, True Negatives, False Positives, and False Negatives, respectively.

\paragraph{Accuracy (Acc).} This metric measures the proportion of correctly classified samples among the total number of samples:
\begin{equation}
    \text{Acc} = \frac{TP+TN}{TP+TN+FP+FN}
\end{equation}

\paragraph{Precision (Pre).} Precision quantifies the accuracy of positive predictions, indicating the reliability of the model when it predicts a positive class:
\begin{equation}
    \text{Pre} = \frac{TP}{TP+FP}
\end{equation}

\paragraph{Recall (Rec).} Also known as Sensitivity, Recall measures the ability of the model to identify all actual positive instances:
\begin{equation}
    \text{Rec} = \frac{TP}{TP+FN}
\end{equation}

\paragraph{F1-score (F1).} The F1-score is the harmonic mean of Precision and Recall, providing a balanced metric especially when class distributions are imbalanced:
\begin{equation}
    \text{F1} = 2 \cdot \frac{\text{Pre} \cdot \text{Rec}}{\text{Pre} + \text{Rec}}
\end{equation}

\paragraph{Area Under the Receiver Operating Characteristic Curve (AUC).} The AUC represents the probability that a randomly chosen positive instance is ranked higher than a randomly chosen negative instance. It is calculated by integrating the area under the Receiver Operating Characteristic (ROC) curve, which plots the True Positive Rate (TPR) against the False Positive Rate (FPR) at various threshold settings.

\section{Implementation Details}\label{sec:implementation}
All experiments were conducted on a system equipped with NVIDIA A100 GPU and AMD EPYC 7473X 24-core processor (48 threads). We train the model for 50 epochs with a learning rate of 0.0005, a weight decay of 0.01, and a batch size of 64. The hidden dimension of all model components is set to 64, and a dropout rate of 0.3 is applied. For each experiment, we run the model with three different random seeds and report the mean and standard deviation of the results. All datasets are split into training, validation, and test sets with a ratio of 6:1:3.

\paragraph{Architectural Configurations.}
To ensure reproducibility, we detail the specific parameterization of AFR-Net's internal modules using the notation defined in the Methodology.
The effective resistance encoder $\psi_{\text{res}}(\cdot)$ is implemented as a two-layer MLP with a hidden dimension of 128, utilizing GELU activation to project raw resistance features into the latent space.
The edge-gating network $\omega_{\text{gate}}(\cdot)$ employs a two-layer MLP with a hidden dimension of 64, utilizing the SiLU activation function to facilitate gradient flow.
For the adaptive flow routing, we set the structural laplacian regularization scalar $\gamma = 10^{-6}$ to guarantee numerical stability, and the scaling parameter $\tau=8.0$ for mask generation. 
Finally, the prediction head consists of a two-layer MLP with ReLU activation. The framework is implemented using PyTorch\footnote{\url{https://pytorch.org/}} and the Deep Graph Library (DGL\footnote{\url{https://www.dgl.ai/}}).

\section{Full Results of Experiments}\label{app:full_exp}
\input{tables/ABCD_OCD}
\input{tables/ABCD_ADHD}
\input{tables/ABCD_Anx}
\input{tables/PPMI}
Here we report the full experimental results, shown in Tables \ref{tab:abcd_ocd_performance}, \ref{tab:abcd_bip_performance}, \ref{tab:abcd_anx_performance}, and \ref{tab:ppmi_performance}. As can be seen, our method AFR-Net performs very well across all datasets and achieves overall results that surpass all baselines. This demonstrates that AFR-Net effectively captures the brain’s communication patterns, thereby attaining superior performance on downstream tasks.

\section{Additional Interpretability Analysis}
\input{figures/case_OCD}
To evaluate the generalizability of AFR-Net across different psychiatric conditions, we conduct an identical two-stage analysis on the Obsessive-Compulsive Disorder (OCD) dataset.First, we visualize the informational backbone by extracting the top 100 edges with the highest information flow averaged across the OCD cohort. As shown in Figure~\ref{fig:app_int}(a), these high-traffic edges remain predominantly concentrated within the visual and somatomotor networks. This observation mirrors the topological findings in the Anxiety dataset and aligns with the brain’s conserved structural core~\cite{bullmore2009complex}, confirming that AFR-Net consistently learns valid anatomical priors regardless of the specific disease pathology.Next, we investigate pathological routing alterations using a two-sample $t$-test ($p < 0.05$, FDR). Figure~\ref{fig:app_int}(b) reveals a distinct pattern of Visual-Limbic Hyper-connectivity (Patient $>$ HC). Specifically, the Primary Visual Cortex (V1, V2) and the Hippocampus (H) emerge as coupled pathological hubs, exchanging excessive information. Synthesizing our connectivity findings with this functional evidence offers a mechanistic explanation for the “doubting disease.” Existing neuroscience literature posits that OCD is driven by a deficit in metamemory, leading to the “Memory Distrust” phenomenon~\cite{van2003repeated}. While the visual cortex is responsible for gathering sensory evidence, the hippocampus is critical for generating the "feeling of knowing" that terminates the action. The hyper-active flow we observed from Visual to Limbic regions strongly suggests a maladaptive sensory-memory mismatch: the relentless accumulation of visual evidence fails to translate into memory confidence, trapping the patient in a loop of repetitive checking.This finding is highly consistent with recent high-precision neuroimaging studies. For instance, \citet{hearne2025distinct} identify visual network hyper-connectivity as a primary discriminative feature of OCD, while \citet{liu2025unveiling} report specific aberrant coupling between occipital and subcortical regions. By autonomously uncovering this specific “Check-and-Doubt” short-circuit, AFR-Net demonstrates its sensitivity in disentangling disease-specific biomarkers from the shared brain topology.

%% file: tables/notation.tex
\begin{table}[h]
    \centering
    \caption{Summary of notations used in this paper.}
    \label{tab:notations}
    \renewcommand{\arraystretch}{1.1}
    \begin{tabular}{ll}
        \toprule
        \textbf{Symbol} & \textbf{Description} \\
        \midrule
        $\mathcal{G}_{\text{sc}}, \mathcal{G}_{\text{fc}}$ & Structural and Functional brain network graphs \\
        $N, M$ & Number of nodes (ROIs) and structural edges \\
        $\mathbf{A}_{\text{sc}}, \mathbf{A}_{\text{fc}}$ & Adjacency matrices for SC and FC \\
        $\mathbf{X}, \mathbf{H}$ & Input node features and encoded hidden representations \\
        $\mathbf{B}$ & Node-edge incidence matrix $\in \mathbb{R}^{M \times N}$ \\
        $R_{ij}$ & Effective Resistance Distance (ERD) between nodes $v_i$ and $v_j$ \\
        $c_{ij}, \mathbf{C}$ & Learned flow capacity of edge $e_{ij}$ and its diagonal matrix \\
        $\mathbf{L}_{\text{flow}}$ & Regularized structural Laplacian ($\mathbf{B}^\top \mathbf{C} \mathbf{B} + \delta \mathbf{I}$) \\
        $\mathbf{L}_{\text{fc}}$ & Functional demand Laplacian ($\mathbf{D}_{\text{fc}} - |\mathbf{A}_{\text{fc}}|$) \\
        $\mathbf{v}^{(st)}$ & Node potential vector induced by unit demand from $s$ to $t$ \\
        $I_{ij}^{(st)}$ & Information flow intensity on edge $e_{ij}$ for node pair $(s, t)$ \\
        $\Phi_{ij}, \mathbf{\Phi}$ & Aggregated information flow on edge $e_{ij}$ and the flow map \\
        $\mathbf{M}$ & Pattern-guided soft attention mask (routing mask) \\
        $y, \hat{y}$ & Ground-truth label and predicted probability \\
        \bottomrule
    \end{tabular}
\end{table}

%% file: tables/ABCD_OCD.tex
\begin{table*}[htbp]
\centering
\caption{Performance comparison on ABCD-OCD dataset (mean $\pm$ std \%).}
\label{tab:abcd_ocd_performance}
\resizebox{0.7\textwidth}{!}{%
\begin{tabular}{cccccc}
\toprule
Method & Accuracy & Precision & Recall & F1 & AUC \\
\midrule
MLP & $61.97 \pm 1.54$ & $\mathbf{63.90 \pm 1.49}$ & $57.47 \pm 3.48$ & $60.48 \pm 2.27$ & $68.62 \pm 0.88$ \\
GCN & $47.50 \pm 2.51$ & $47.24 \pm 3.21$ & $60.34 \pm 20.60$ & $52.26 \pm 10.45$ & $47.11 \pm 3.13$ \\
GAT & $58.23 \pm 2.48$ & $57.38 \pm 2.33$ & $62.64 \pm 1.99$ & $59.90 \pm 2.18$ & $63.58 \pm 1.10$ \\
GIN & $48.07 \pm 4.64$ & $50.19 \pm 7.24$ & $42.82 \pm 21.98$ & $42.93 \pm 11.59$ & $46.97 \pm 6.20$ \\
GraphSAGE & $59.94 \pm 1.94$ & $59.77 \pm 1.92$ & $59.77 \pm 2.49$ & $59.76 \pm 2.12$ & $65.81 \pm 2.12$ \\
Graphormer & $61.95 \pm 2.36$ & $63.15 \pm 1.28$ & $56.32 \pm 7.43$ & $59.40 \pm 4.73$ & $68.01 \pm 1.19$ \\
\midrule
Triplet & $50.82 \pm 0.67$ & $45.25 \pm 3.45$ & $57.00 \pm 6.18$ & $43.52 \pm 4.14$ & $52.56 \pm 0.03$ \\
AGT & $53.79 \pm 3.44$ & $53.69 \pm 2.66$ & $58.05 \pm 8.13$ & $51.52 \pm 5.47$ & $54.18 \pm 3.17$ \\
BQN & $49.91 \pm 0.12$ & $39.96 \pm 14.20$ & $80.00 \pm 28.28$ & $53.29 \pm 18.91$ & $57.29 \pm 3.26$ \\
BrainGNN & $52.74 \pm 0.61$ & $52.34 \pm 0.53$ & $57.88 \pm 6.15$ & $54.82 \pm 2.79$ & $53.53 \pm 0.10$ \\
Cross-GNN & $56.25 \pm 1.07$ & $55.90 \pm 0.96$ & $62.77 \pm 3.73$ & $58.21 \pm 1.75$ & $58.31 \pm 1.89$ \\
MaskGNN & $58.09 \pm 0.85$ & $58.18 \pm 1.62$ & $61.80 \pm 5.51$ & $58.42 \pm 2.47$ & $63.30 \pm 1.12$ \\
RH-BrainFS & $59.69 \pm 3.12$ & $59.82 \pm 3.13$ & $63.82 \pm 3.65$ & $59.55 \pm 3.15$ & $59.69 \pm 3.12$ \\
NeuroPath & \underline{$62.66 \pm 1.49$} & $60.60 \pm 1.09$ & \underline{$68.58 \pm 3.61$} & \underline{$64.13 \pm 2.10$} & \underline{$69.39 \pm 0.15$} \\
\midrule
\rowcolor{gray!25}
Ours & $\mathbf{69.10 \pm 4.71}$ & \underline{$63.53 \pm 1.05$} & $\mathbf{79.60 \pm 5.26}$ & $\mathbf{70.62 \pm 2.36}$ & $\mathbf{72.68 \pm 1.78}$ \\
\bottomrule
\end{tabular}
}
\end{table*}

%% file: tables/ABCD_ADHD.tex
\begin{table*}[htbp]
\centering
\caption{Performance comparison on ABCD-ADHD dataset (mean $\pm$ std \%).}
\label{tab:abcd_bip_performance}
\resizebox{0.7\textwidth}{!}{%
\begin{tabular}{cccccc}
\toprule
Method & Accuracy & Precision & Recall & F1 & AUC \\
\midrule
MLP & $60.42 \pm 1.44$ & $59.67 \pm 0.75$ & $64.17 \pm 6.29$ & $61.75 \pm 3.17$ & $57.55 \pm 5.06$ \\
GCN & $49.17 \pm 5.64$ & $46.76 \pm 7.73$ & $63.33 \pm 37.86$ & $51.28 \pm 21.88$ & $50.81 \pm 9.23$ \\
GAT & $53.75 \pm 2.17$ & $52.88 \pm 1.59$ & $68.33 \pm 5.20$ & $59.59 \pm 2.77$ & $59.31 \pm 1.37$ \\
GIN & $47.92 \pm 1.91$ & $48.46 \pm 1.43$ & $\mathbf{74.17 \pm 16.07}$ & $58.31 \pm 5.85$ & $44.37 \pm 4.37$ \\
GraphSAGE & $47.92 \pm 1.91$ & $47.72 \pm 2.07$ & $48.33 \pm 10.10$ & $47.81 \pm 5.84$ & $52.60 \pm 3.32$ \\
Graphormer & $55.28 \pm 1.90$ & $59.72 \pm 2.48$ & $59.97 \pm 6.96$ & $55.19 \pm 5.09$ & $58.79 \pm 2.17$ \\
\midrule
Triplet & $49.63 \pm 2.17$ & $53.15 \pm 6.77$ & $65.75 \pm 3.72$ & $47.91 \pm 5.07$ & $48.84 \pm 3.72$ \\
AGT & $55.67 \pm 2.44$ & $56.75 \pm 4.11$ & $41.29 \pm 12.78$ & $41.91 \pm 7.66$ & $53.55 \pm 4.10$ \\
BQN & $50.17 \pm 0.24$ & $40.08 \pm 14.02$ & $80.00 \pm 28.28$ & $53.41 \pm 18.75$ & $57.11 \pm 0.68$ \\
BrainGNN & $55.00 \pm 2.57$ & $55.88 \pm 3.58$ & $47.50 \pm 19.47$ & $51.35 \pm 20.84$ & $54.37 \pm 5.89$ \\
Cross-GNN & $54.31 \pm 2.97$ & $54.90 \pm 2.52$ & $53.72 \pm 5.82$ & $52.76 \pm 3.79$ & $56.24 \pm 3.94$ \\
MaskGNN & $56.58 \pm 2.20$ & \underline{$61.51 \pm 4.22$} & $45.48 \pm 1.91$ & $49.22 \pm 1.50$ & $61.58 \pm 2.05$ \\
RH-BrainFS & \underline{$62.48 \pm 3.63$} & $\mathbf{62.31 \pm 2.82}$ & $69.84 \pm 9.22$ & $61.59 \pm 2.94$ & $61.76 \pm 2.61$ \\
NeuroPath & $58.33 \pm 3.82$ & $56.92 \pm 3.47$ & $69.17 \pm 6.29$ & \underline{$62.36 \pm 3.71$} & $\mathbf{64.19 \pm 0.35}$ \\
\midrule
\rowcolor{gray!25}
Ours & $\mathbf{63.75 \pm 2.60}$ & $60.86 \pm 2.61$ & $\underline{71.67 \pm 9.46}$ & $\mathbf{65.50 \pm 2.35}$ & $\underline{62.38 \pm 1.08}$ \\
\bottomrule
\end{tabular}
}
\end{table*}

%% file: tables/ABCD_Anx.tex
\begin{table*}[htbp]
\centering
\caption{Performance comparison on ABCD-Anx dataset (mean $\pm$ std \%).}
\label{tab:abcd_anx_performance}
\resizebox{0.7\textwidth}{!}{%
\begin{tabular}{cccccc}
\toprule
Method & Accuracy & Precision & Recall & F1 & AUC \\
\midrule
MLP & $53.14 \pm 1.11$ & $52.89 \pm 0.98$ & $53.22 \pm 3.88$ & $53.03 \pm 2.44$ & $56.83 \pm 0.65$ \\
GCN & $45.47 \pm 2.15$ & $46.07 \pm 2.78$ & $62.75 \pm 20.17$ & $52.47 \pm 8.95$ & $45.02 \pm 3.33$ \\
GAT & $54.11 \pm 2.66$ & $53.69 \pm 2.48$ & $56.30 \pm 4.68$ & $54.95 \pm 3.45$ & $54.37 \pm 1.51$ \\
GIN & $48.81 \pm 1.21$ & $48.33 \pm 1.44$ & $53.50 \pm 23.53$ & $49.50 \pm 10.60$ & $48.01 \pm 0.78$ \\
GraphSAGE & $54.39 \pm 0.72$ & $54.29 \pm 0.78$ & $53.22 \pm 0.49$ & $53.75 \pm 0.54$ & $56.15 \pm 1.84$ \\
Graphormer & $50.09 \pm 0.86$ & $50.08 \pm 0.69$ & $54.08 \pm 1.40$ & $51.34 \pm 0.87$ & $51.71 \pm 0.27$ \\
\midrule
Triplet & $49.50 \pm 0.55$ & $36.68 \pm 5.12$ & $53.81 \pm 13.17$ & $39.56 \pm 7.83$ & $48.07 \pm 0.91$ \\
AGT & $52.93 \pm 1.75$ & $54.50 \pm 2.83$ & $36.41 \pm 12.74$ & $38.54 \pm 9.82$ & $52.51 \pm 1.10$ \\
BQN & $50.03 \pm 0.04$ & $40.01 \pm 14.12$ & $\mathbf{80.00 \pm 28.28}$ & $53.35 \pm 18.84$ & $51.30 \pm 0.69$ \\
BrainGNN & $53.70 \pm 2.06$ & $54.53 \pm 2.55$ & $42.58 \pm 4.14$ & $47.71 \pm 3.10$ & $55.38 \pm 4.06$ \\
Cross-GNN & $49.66 \pm 0.36$ & $49.49 \pm 0.55$ & $52.79 \pm 3.01$ & $50.08 \pm 2.05$ & $50.56 \pm 1.25$ \\
MaskGNN & $51.09 \pm 1.75$ & $51.53 \pm 1.30$ & $50.80 \pm 5.64$ & $50.36 \pm 3.40$ & $52.67 \pm 0.72$ \\
RH-BrainFS & $54.69 \pm 5.41$ & \underline{$55.33 \pm 6.12$} & $69.50 \pm 8.11$ & $53.69 \pm 5.26$ & $54.75 \pm 5.36$ \\
NeuroPath & \underline{$56.49 \pm 2.33$} & $\mathbf{56.54 \pm 2.44}$ & $54.62 \pm 2.22$ & \underline{$55.56 \pm 2.24$} & \underline{$57.65 \pm 2.07$} \\
\midrule
\rowcolor{gray!25}
Ours & $\mathbf{58.16 \pm 1.26}$ & $55.07 \pm 2.35$ & $\underline{72.27 \pm 10.53}$ & $\mathbf{62.15 \pm 2.41}$ & $\mathbf{59.03 \pm 1.94}$ \\
\bottomrule
\end{tabular}
}
\end{table*}

%% file: tables/PPMI.tex
\begin{table*}[htbp]
\centering
\caption{Performance comparison on PPMI dataset (mean $\pm$ std \%).}
\label{tab:ppmi_performance}
\resizebox{0.7\textwidth}{!}{%
\begin{tabular}{cccccc}
\toprule
Method & Accuracy & Precision & Recall & F1 & AUC \\
\midrule
MLP & $56.48 \pm 5.78$ & $64.58 \pm 5.51$ & $46.67 \pm 10.41$ & $54.00 \pm 9.10$ & $70.52 \pm 0.79$ \\
GCN & $55.56 \pm 2.78$ & $56.56 \pm 2.43$ & \underline{$88.33 \pm 7.64$} & $68.79 \pm 1.37$ & $64.11 \pm 5.57$ \\
GAT & $62.96 \pm 2.62$ & $63.75 \pm 3.13$ & $78.33 \pm 2.36$ & $70.19 \pm 0.88$ & $70.83 \pm 4.25$ \\
GIN & $55.56 \pm 0.00$ & $55.56 \pm 0.00$ & $\mathbf{100.00 \pm 0.00}$ & $71.43 \pm 0.00$ & $50.00 \pm 0.00$ \\
GraphSAGE & $60.19 \pm 8.02$ & $58.54 \pm 5.17$ & $\mathbf{100.00 \pm 0.00}$ & $73.76 \pm 4.04$ & $70.31 \pm 7.15$ \\
Graphormer & $64.81 \pm 3.21$ & $64.26 \pm 3.13$ & \underline{$83.33 \pm 2.89$} & $72.49 \pm 1.55$ & $61.77 \pm 5.02$ \\
\midrule
Triplet & $51.85 \pm 5.24$ & $53.70 \pm 2.62$ & $70.00 \pm 42.43$ & $53.17 \pm 25.82$ & $66.25 \pm 9.51$ \\
AGT & $53.70 \pm 2.62$ & $55.79 \pm 0.33$ & $81.67 \pm 25.93$ & $64.29 \pm 10.10$ & $61.20 \pm 1.69$ \\
BQN & $52.78 \pm 2.27$ & $59.07 \pm 5.43$ & $68.33 \pm 34.24$ & $56.36 \pm 18.10$ & $60.73 \pm 13.60$ \\
BrainGNN & $67.90 \pm 4.62$ & $71.03 \pm 3.41$ & $71.11 \pm 6.29$ & $71.02 \pm 4.62$ & $71.30 \pm 4.41$ \\
Cross-GNN & $62.96 \pm 1.31$ & $70.78 \pm 5.30$ & $60.00 \pm 14.14$ & $63.51 \pm 5.42$ & $65.83 \pm 8.74$ \\
MaskGNN & $68.52 \pm 3.46$ & $69.80 \pm 5.05$ & $78.33 \pm 4.71$ & $73.50 \pm 1.11$ & $68.23 \pm 0.74$ \\
RH-BrainFS & $70.37 \pm 3.20$ & $69.83 \pm 3.72$ & $80.95 \pm 4.76$ & $68.54 \pm 3.15$ & $68.25 \pm 3.06$ \\
NeuroPath & \underline{$72.22 \pm 0.00$} & \underline{$75.93 \pm 1.60$} & $73.33 \pm 2.89$ & \underline{$74.56 \pm 0.76$} & \underline{$71.88 \pm 1.13$} \\
\midrule
\rowcolor{gray!25}
Ours & $\mathbf{82.69 \pm 3.33}$ & $\mathbf{84.37 \pm 0.43}$ & \underline{$83.33 \pm 8.25$} & $\mathbf{83.70 \pm 3.85}$ & $\mathbf{90.38 \pm 1.64}$ \\
\bottomrule
\end{tabular}
}
\end{table*}

%% file: figures/case_OCD.tex
\begin{figure*}[t]
    \centering
    \includegraphics[width=\textwidth]{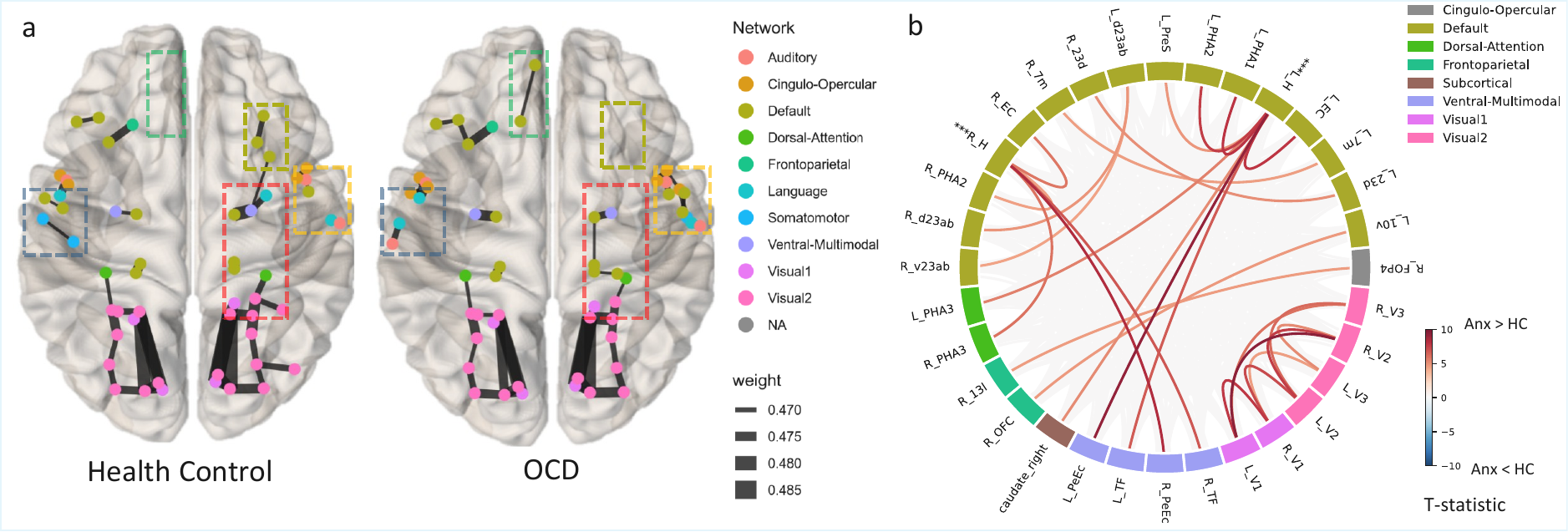}
    \caption{(a) The Informational Backbone. Visualization of the top 100 edges with the highest information flow intensity averaged across all subjects in the OCD dataset. Consistent with the main findings, the model identifies the visual and somatomotor networks as the stable structural core. (b) Pathological Hyper-connectivity in OCD. A circle plot showing edges with significant group differences ($p < 0.05$, FDR corrected). Red lines indicate hyper-active flow (Patient $>$ HC). The Primary Visual Cortex (V1/V2) and Hippocampus (H, marked with ***) emerge as pathological hubs, exhibiting dense hyper-connectivity between sensory processing and memory encoding circuits.}
    \label{fig:app_int}
\end{figure*}